%File: anonymous-submission-latex-2024.tex
\documentclass[letterpaper]{article} % DO NOT CHANGE THIS
\usepackage{aaai24}
\usepackage{times}  % DO NOT CHANGE THIS
\usepackage{helvet}  % DO NOT CHANGE THIS
\usepackage{courier}  % DO NOT CHANGE THIS
\usepackage[hyphens]{url}  % DO NOT CHANGE THIS
\usepackage{graphicx} % DO NOT CHANGE THIS
\urlstyle{rm} % DO NOT CHANGE THIS
  % DO NOT CHANGE THIS
\usepackage{natbib}  % DO NOT CHANGE THIS AND DO NOT ADD ANY OPTIONS TO IT
\usepackage{caption} % DO NOT CHANGE THIS AND DO NOT ADD ANY OPTIONS TO IT
\frenchspacing  % DO NOT CHANGE THIS
\setlength{\pdfpagewidth}{8.5in} % DO NOT CHANGE THIS
\setlength{\pdfpageheight}{11in} % DO NOT CHANGE THIS
%
% These are recommended to typeset algorithms but not required. See the subsubsection on algorithms. Remove them if you don't have algorithms in your paper.
\usepackage{algorithm}
\usepackage{algorithmic}
\usepackage{multicol}
\usepackage{multirow}
\usepackage{booktabs}
\usepackage{appendix}
\usepackage{comment}
\usepackage{amssymb}
\usepackage{bm}
\usepackage{dsfont}
\usepackage{amsthm}
\usepackage{algorithm}
\usepackage{algorithmic}
\usepackage{amsmath}
\newtheorem{proposition}{Proposition}
\newtheorem{lemma}{Lemma}

\newcommand{\Amc}[0]{{{\mathcal{A}}}}

\newcommand{\Dmc}[0]{{{\mathcal{D}}}}

\newcommand{\Kmc}[0]{{{\mathcal{K}}}}
\newcommand{\Lmc}[0]{{{\mathcal{L}}}}
\newcommand{\Mmc}[0]{{{\mathcal{M}}}}

\newcommand{\Pmc}[0]{{{\mathcal{P}}}}

\newcommand{\Smc}[0]{{{\mathcal{S}}}}

\newcommand{\piv}[0]{{\bm{\pi}}}
\newcommand{\muv}[0]{{\bm{\mu}}}

\newcommand{\Ebb}{\mathbb{E}}
\newcommand{\Rbb}{\mathbb{R}}

\newcommand{\KL}{\mathrm{KL}}

%\usepackage{pifont}% http://ctan.org/pkg/pifont
%\newcommand{\cmark}{\ding{51}}%
%\newcommand{\xmark}{\ding{55}}%

%\usepackage{xcolor,colortbl}

%\definecolor{Gray}{gray}{0.85}
%\definecolor{LightCyan}{rgb}{0.88,1,1}

%\newcolumntype{a}{>{\columncolor{Gray}}c}
%\newcolumntype{b}{>{\columncolor{white}}c}
%
% These are are recommended to typeset listings but not required. See the subsubsection on listing. Remove this block if you don't have listings in your paper.
\usepackage{newfloat}
\usepackage{listings}
\DeclareCaptionStyle{ruled}{labelfont=normalfont,labelsep=colon,strut=off} % DO NOT CHANGE THIS
\lstset{%
	basicstyle={\footnotesize\ttfamily},% footnotesize acceptable for monospace
	numbers=left,numberstyle=\footnotesize,xleftmargin=2em,% show line numbers, remove this entire line if you don't want the numbers.
	aboveskip=0pt,belowskip=0pt,%
	showstringspaces=false,tabsize=2,breaklines=true}
\floatstyle{ruled}
\newfloat{listing}{tb}{lst}{}
\floatname{listing}{Listing}
%
% Keep the \pdfinfo as shown here. There's no need
% for you to add the /Title and /Author tags.
\pdfinfo{
/TemplateVersion (2024.1)
}

\setcounter{secnumdepth}{0} %May be changed to 1 or 2 if section numbers are desired.

% The file aaai24.sty is the style file for AAAI Press
% proceedings, working notes, and technical reports.
%

% Title

% Your title must be in mixed case, not sentence case.
% That means all verbs (including short verbs like be, is, using,and go),
% nouns, adverbs, adjectives should be capitalized, including both words in hyphenated terms, while
% articles, conjunctions, and prepositions are lower case unless they
% directly follow a colon or long dash
\title{Meta-Inverse Reinforcement Learning for Mean Field Games via Probabilistic Context Variables}
\author{
    %Authors
    % All authors must be in the same font size and format.
   Yang Chen\textsuperscript{\rm 1,2}, Xiao Lin\textsuperscript{\rm 3}, Bo Yan\textsuperscript{\rm 3}, Libo Zhang\textsuperscript{\rm 2}, Jiamou Liu\textsuperscript{\rm 2}, Neset \"Ozkan Tan\textsuperscript{\rm 1,2}, Michael Witbrock\textsuperscript{\rm 1,2}
}
\affiliations{
    %Afiliations
    \textsuperscript{\rm 1} NAOInstitute, University of Auckland, New Zealand\\
    \textsuperscript{\rm 2} School of Computer Science, University of Auckland, New Zealand\\
    \textsuperscript{\rm 3} School of Computer Science, Beijing Institute of Technology, Beijing, China\\
    % If you have multiple authors and multiple affiliations
    % use superscripts in text and roman font to identify them.
    % For example,

    % Sunil Issar\textsuperscript{\rm 2},
    % J. Scott Penberthy\textsuperscript{\rm 3},
    % George Ferguson\textsuperscript{\rm 4},
    % Hans Guesgen\textsuperscript{\rm 5}
    % Note that the comma should be placed after the superscript

    %1900 Embarcadero Road, Suite 101\\
    %Palo Alto, California 94303-3310 USA\\
    % email address must be in roman text type, not monospace or sans serif
    \{yang.chen,jiamou.liu,neset.tan,m.witbrock\}@auckland.ac.nz, lzha797@aucklanduni.ac.nz, linxiao\_zj@foxmail.com, yanbo@bit.edu.cn
%
% See more examples next
}

\begin{document}

\maketitle

\begin{abstract}
Designing suitable reward functions for numerous interacting intelligent agents is challenging in real-world applications. Inverse reinforcement learning (IRL) in mean field games (MFGs) offers a practical framework to infer reward functions from expert demonstrations. While promising, 
%existing IRL methods for MFGs learn reward functions from scratch for each task, resulting in low data efficiency. Additionally, 
the assumption of agent homogeneity limits the capability of existing methods to handle demonstrations with heterogeneous and unknown objectives, which are common in practice. To this end, we propose a deep latent variable MFG model and an associated IRL method. Critically, our method can infer rewards from different yet structurally similar tasks without prior knowledge about underlying contexts or modifying the MFG model itself. 
%capable of inferring rewards from demonstrations of different yet structurally similar tasks. 
 Our experiments, conducted on simulated scenarios and a real-world spatial taxi-ride pricing problem, demonstrate the superiority of our approach over state-of-the-art IRL methods in MFGs. 
\end{abstract}

\section{Introduction}
Understanding incentives among interacting agents in real-world decision-making and control tasks is a fundamental challenge in multi-agent systems. Inverse reinforcement learning (IRL) \citep{ng1999policy,ng2000algorithms} addresses this issue by inferring reward functions from expert demonstrations, offering a succinct representation of tasks. IRL in multi-agent systems serves two primary purposes. First, it aids in comprehending and predicting the objectives of interacting agents, such as determining the destinations of autonomous vehicles \citep{you2019advanced}. Second, it enables the design of agent environments with known reward signals to guide their behaviour as desired, akin to mechanism design \cite{fu2021evaluating}. %{\em e.g.,} again, planning for autonomous vehicles under new environment conditions \citep{you2019advanced}.

A prominent challenge with IRL is the ``curse of the agent number'', {\em i.e.,} an increase in agent number leads to exponential interaction complexities, resulting in impractical time and memory expenses. Fortunately, recent advancements in IRL within mean field games (MFGs) \citep{yang2018learning,chen2022individual,chen2023adversarial} offer a solution. By leveraging mean-field approximation, these methods simplify interactions among many agents to only two agents (individual-population), mitigating the computational burden. These MFG-based approaches have demonstrated compelling outcomes across diverse large-scale multi-agent tasks. Applications include modelling and predicting social media population behaviour \citep{yang2018learning}, product pricing in expansive markets, virus propagation modelling and explaining emerging social norms \citep{chen2022individual,chen2023adversarial}.

%\begin{figure}%[!htp]
%\centering
%	\includegraphics[width=.43\textwidth]{designs/nyc.jpeg}
%	\caption{The spatial pricing for taxi rides in New York City. Different taxis may have different preferences for trips. The distributions of taxis and the passenger demand are from the real-world data \cite{ata2019spatial}.}\label{fig:nyc-taxis}
%\end{figure}

While appealing, these existing methods typically rely on the theoretically powerful yet practically unideal assumption in MFGs: all agents are homogeneous, {\em i.e.,} they are identical in the reward function, state-action space and dynamics. Real-world scenarios, however, frequently involve demonstrated behaviour with distinct and unknown rewards. For example, in spatial pricing for taxi rides, drivers may have different preferences based on factors such as distance, origins, and passenger destinations \cite{ata2019spatial}.
To address this issue, efforts have been made to generalise mean-field approximation by introducing additional type variables for each agent to differentiate reward functions \citep{subramanian2019reinforcement,ganapathi2020multi,ghosh2020model}. However, this approach requires prior knowledge of these variables, making it unsuitable for handling agents with unknown types or contexts, as observed in taxi trajectories.

%In real-world scenarios, however, we often encounter diverse demonstrated behaviour with {\em distinct} and {\em unknown} rewards. For instance, in the problem of spatial pricing for taxi rides, drivers may exhibit different preferences based on factors such as distance, origins, and passenger destinations \cite{ata2019spatial}, as studied in our experiments subsequently. Efforts to address this issue include generalising mean-field approximation with additional type variables for each agent \citep{subramanian2019reinforcement,ganapathi2020multi,ghosh2020model}, but this approach assumes knowledge of these variables beforehand and therefore cannot handle unknown types or contexts, as observed in taxi trajectories. 

Additionally, incorporating types into mean-field approximation makes the model more complex, necessitating the reevaluation of important theoretical properties, such as the existence and uniqueness of an equilibrium \citep{ghosh2020model}, to ensure a well-defined corresponding IRL problem. Although such generalisations are theoretically valued, from a machine learning viewpoint, simpler models with fewer theoretical restrictions are often preferred. Given the considerations above, the question arises: {\em can we enable IRL to handle numerous agents with {\bf\em unknown} reward functions {\bf\em without altering} the mean-field approximation?}

On the other hand, an emerging branch of IRL called meta-IRL \cite{seyed2019smile,yu2019meta,xu2019learning} combines IRL and meta-learning to address similar reward structures among demonstrations from different tasks. It introduces a latent probabilistic context variable that influences the reward function and performs context-conditioned reward inference without prior knowledge of the contexts. Although currently limited only to single-agent scenarios, this approach inspires the idea of assigning probabilistic context variables (representing types) externally to a family of MFGs instead of internally to agents within an MFG. By doing so, each MFG in this family retains its original theoretical properties, enabling the development of a meta-IRL method on top of it.
%an emerging branch of IRL called meta-IRL \cite{seyed2019smile,yu2019meta,xu2019learning} combines IRL and meta-learning to tackle similar reward structures among demonstrations from various tasks. It introduces a latent probabilistic context variable that influences the reward function and conducts the context-conditioned reward inference without requiring prior knowledge of the contexts. Although this approach is currently limited to single-agent scenarios, still, it provides us with a critical inspiration that if we can assign probabilistic context variables (representing types) outside to a family of MFGs instead of assigning types to agents inside an MFG, then each MFG in this family will persist its original theoretical properties, allowing us to develop a meta-IRL on top of it.   

\begin{figure}
	\centering
	\includegraphics[width=.48\textwidth]{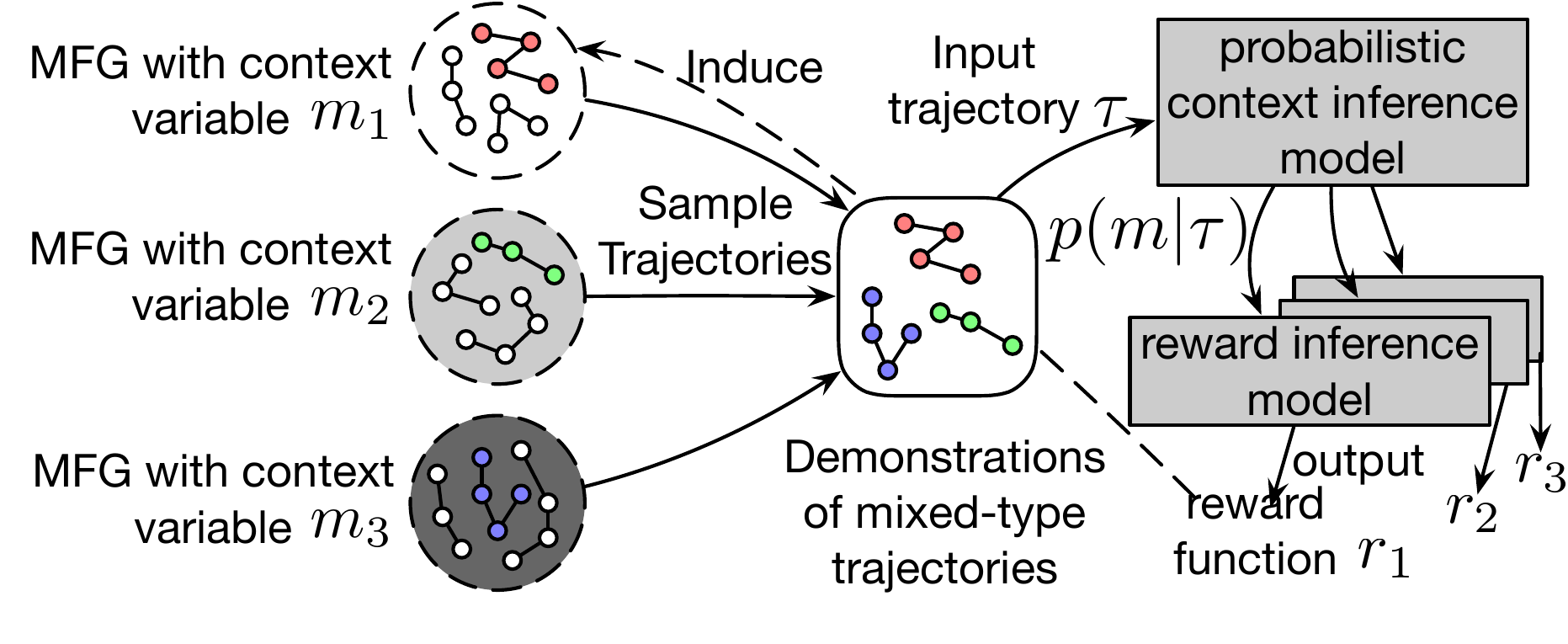}
	\caption{An overview of the mechanism of PEMMFIRL. The context variable is unknown to the framework.}\label{fig:overview}
\end{figure}

This paper implements this idea into a novel IRL framework called Probabilistic Embedding for Meta Mean Field IRL (PEMMFIRL) that answers the question we raised above. It integrates meta-IRL, mean-field approximation, and latent variable models into a unified framework, as illustrated in Fig.~\ref{fig:overview}.
Our contributions are threefold: 
\begin{enumerate}
	\item We extend the notion of MFG by introducing a probabilistic contextual variable, significantly enhancing its ability to handle heterogeneous agents without introducing additional assumptions to mean-field approximation.
	\item We develop PEMMFIRL, an associated IRL framework for this generalised MFG model, capable of inferring reward functions from demonstrations with different context variables. Importantly, PEMMFIRL can infer rewards from structurally similar tasks without prior knowledge about underlying contexts.
	\item Experimental results on simulated tasks and a real-world spatial taxi-ride pricing problem demonstrate the effectiveness of our approach, achieving an outstanding increase in drivers' average profit when applied to real-world passenger demand data.
\end{enumerate}

\section{Related Work}\label{sec:related}

Our work is closely related to the literature on MFGs, initially introduced in the continuous setting by \citep{huang2006large} and \citep{lasry2007mean}. Later, MFGs were extended to the discrete model by \citep{gomes2010discrete}, which is commonly adopted in agent learning. Recently, learning MFGs has garnered significant attention \citep{cardaliaguet2017learning}, with existing methods relying on reinforcement learning techniques \citep{yang2018mean,guo2019learning,subramanian2019reinforcement,cui2021approximately}.
Our method stands out by directly recovering the reward function from observed behaviour, eliminating the need for manual reward design.

IRL was introduced by \citep{ng2000algorithms} for single-agent settings. Early IRL methods based on margin optimization \citep{ratliff2006maximum} were ill-defined. To resolve this, Maximum Entropy IRL (MaxEnt IRL) was proposed by \citep{ziebart2008maximum,ziebart2010modeling}. %MaxEnt IRL uses a probabilistic approach to find a reward function that matches the observed behaviour's reward expectation.
However, MaxEnt IRL is limited to small, discrete problems due to its iterative nature in reward function tuning. To extend MaxEnt IRL to high-dimensional or continuous domains, \citep{fu2018learning} introduced Adversarial IRL (AIRL). It utilises a sampling-based approximation by relating MaxEnt IRL to generative adversarial networks \citep{goodfellow2014generative}, enabling effective reward tuning for complex scenarios. Some recent work extends AIRL to the multi-agent setting \citep{yu2019multi,fu2021evaluating} and the mean-field setting \cite{yang2018learning,chen2023adversarial}. However, they are limited to either the countable-agent or the homogeneous many-agent cases.

We build our approach on the problem of meta-learning, also known as ``learning to learn'' \cite{thrun2012learning}, which aims to train models to adapt quickly to new tasks. Many methods have been proposed including the memory-based methods \citep{duan2016rl,santoro2016meta,wang2016learning,mishra2017meta}, methods that learn an optimiser and/or initialisation \citep{andrychowicz2016learning,ravi2016optimization,finn2017model,sun2018learning}. Our method resembles \citep{rakelly2019efficient} that differentiates different tasks through a deep latent variable. Notably, meta-IRL \citep{xu2019learning,yu2019meta} incorporates meta-learning into IRL in order to rapidly adapt to new tasks or domains. While these methods show promise in both tabular and high-dimensional/continuous tasks, they can only deal with a handful of agents. Unlike these prior work, with the help of MFG, our approach is capable of handling super large-scale tasks. In this sense, our work can be seen as a significant extension of IRL to address many-agent problems with multi-task demonstrations.

\section{Preliminaries}\label{sec:pre}
%This section provides the backgrounds of {\em mean field games} (MFGs) and the {\em mean field inverse reinforcement learning} (MFIRL), upon which our proposed meta-mean field IRL method is built.

\subsection{Mean Field Games}
Consider a population of homogeneous agents sharing the same finite (local) state space $\Smc$ and action space $\Amc$. Let $\{1,2,\ldots, N\}$ be an enumeration of agents. MFGs reduce the all-agent interactions to two-party interactions between a single representative agent and the population, wherein the population is embodied by an empirical distribution of states of the system, called a {\em mean field}, given by
\begin{equation*}
	\mu \in \Delta(\Smc) \text{ such that } \mu(s) \triangleq \lim_{N \to \infty} \frac{1}{N} \sum_{i=1}^N \mathds{1}_{\{s_i = s\}},
\end{equation*}
where %$s_i\in \Smc$ is the state of the $i$th agent and 
$\mathds{1}$ denotes the indicator function, {\em i.e.,} $\mathds{1}_{x} = 1$ if $x$ is true and $0$ otherwise. The reward function $r:\Smc\times \Amc \times \Delta(\Smc) \to \Rbb$ and state transition dynamics $P\colon \Smc \times \Amc \times \Delta(\Smc) \times \Smc \to [0,1]$ of the representative agent thus depend on the current state, action and the additional mean field. Let $T > 0$ be a finite horizon and the initial mean field $\mu^0$ be given, the sequence of mean fields $\muv = \mu^0, \mu^1,\ldots, \mu^T$  is called a {\em mean-field flow}. Likewise, a {\em policy follow} $\piv = \pi^0, \pi^1, \dots, \pi^t$ determines the agent's strategy, where $\pi^t:\Smc \to \Delta(\Amc)$ %is the policy at step $t$ that 
maps from states to a distribution over actions. 

A mean-field flow $\muv$ is said to be {\em consistent} with a policy flow $\piv$ if for all $t< T$, $\mu^{t+1}$ matches an individual's state marginal distribution when executing $\pi^t$. This can be formally written by the discrete-time McKean-Vlasov (MKV) equation \citep{carmona2013control}:
\begin{equation}\label{eq:MKV}
	\mu^{t+1}(s') = \sum_{s \in \Smc} \mu^t(s) \sum_{a \in \Amc} \pi^t(a \vert s)\; P(s' \vert s, a, \mu^t).
\end{equation}
Let $\tau = s^0, a^0, \ldots, s^T, a^T$ denote a state-action trajectory of an individual. The trajectory distribution induced by a pair of mean field flow and policy flow can be written as:
\begin{equation*}\label{eq:traj_dist_p}
	p_{\muv, \piv}(\tau) = \mu^0(s^0) \prod_{t=0}^T \pi^t(a^t \vert s^t) P(s^{t+1} \vert s^t,a^t,\mu^t).
\end{equation*}
In particular, if $\muv$ is consistent with $\piv$, we can rewrite $p_{\muv, \piv}(\tau)$ as $p_{\muv, \piv}(\tau) = \prod_{t=0}^T \mu^t(s^t)\pi^t(a^t \vert s^t).$

A policy flow $\piv$ is said to be {\em optimal} to a given $\muv$ if it maximises the expected return $\Ebb_{\tau \sim p_{\muv, \piv}(\tau)}[\sum_{t=0}^T r(s^t, a^t, \mu^t)]$.\footnote{The reward at the last step is zero \citep{elie2020convergence}.} While, an optimal policy flow may not be unique. Entropy-regularised MFG \citep{cui2021approximately} resolves this ambiguity by augmenting the reward with the policy entropy $\Ebb_{a \sim \pi }[-\log \pi(a \vert s)]$, resulting in the following objective:
\begin{equation}\label{eq:entropy_obj}
	\max_{\piv} \Ebb_{\tau \sim p_{\muv, \piv}(\tau)}\left[ \sum_{t=0}^T r(s^t, a^t, \mu^t) -  \log \pi^t(a^t \vert s^t) \right].
\end{equation}
%where $\alpha > 0$ controls the relative importance of the entropy against the rewards. For notational simplicity and without loss of generality, we will take $\alpha = 1$ in the remainder of the paper. 
The solution concept is called the {\em entropy-regularised mean field Nash equilibrium} (ERMFNE) which is a pair of mean field flow and policy flow $(\muv, \piv)$ such that $\piv$ is optimal to $\muv$ and, in turn, $\muv$ is consistent with $\piv$, {\em i.e.,} it fulfils the condition in Eq.~\eqref{eq:MKV} and maximises the objective in Eq.~\eqref{eq:entropy_obj}.
%\begin{definition}\label{def:MFE}
%	A pair of mean field flow and policy flow $(\muv, \piv)$ is called an {\em entropy-regularised mean field Nash equilibrium} if it fulfils both Eq.~\eqref{eq:MKV} and Eq.~\eqref{eq:entropy_obj}.
%\end{definition}
Shown by \citep[Theorem~3]{cui2021approximately},  an ERMFNE exists uniquely under certain conditions.

\subsection{Inverse Reinforcement Learning for MFGs}\label{sec:MFIRL}

Suppose we have no access to the reward function $r(s,a,\mu)$ but have a set of expert demonstrated trajectories $\Dmc = \{\tau_j\}_{j=1}^M$ sampled from an ERMFNE $(\muv_E, \piv_E)$ via $s^0 \sim \mu^0, a^t \sim \pi_E^t(a \vert s^t), s^{t+1} \sim P(s \vert s^t, a^t, \mu_E^t)$. Mean field Adversarial IRL (MF-AIRL) \citep{chen2023adversarial} aims to recover the underlying reward function from demonstrations, which can be interpreted as the following optimisation problem:
\begin{equation}\label{eq:MFIRL}
	\min_\omega D_{\mathrm{KL}}\left(p_{\muv_E, \piv_E}(\tau) \parallel p_\omega(\tau)\right)
\end{equation}
\begin{equation*}\label{eq:traj_dist}
\begin{aligned}
p_\omega(\tau) = \frac{1}{Z(\omega)}\left[ \prod_{t=0}^{T} \mu_\omega^t(s^t) \right] \cdot  \exp\left( \sum_{t=0}^T r_\omega\left(s^t, a^t, \mu_\omega^t\right) \right)
	% &\min_\omega D_{\mathrm{KL}}\left(p_{\muv_E, \piv_E}(\tau) \parallel p_\omega(\tau)\right) = \\  %= \min_\omega \Ebb_{\tau \sim p_{\muv_E, \piv_E}(\tau)}[\log p_\omega(\tau)]\\
	 %  &\;\Ebb_{p_{\muv_E, \piv_E}(\tau)}\left[ \sum_{t=0}^T \log \mu^t_\omega(s^t) + r_\omega(s_t, a_t, \mu_\omega^t) \right] - \log Z_\omega.
\end{aligned}
\end{equation*}
Here, $r_\omega$ is the $\omega$-parameterised reward function,  $p_\omega$ denotes the probability that a trajectory is generated under the $r_\omega$-induced ERMFNE denoted by $(\muv_\omega, \piv_\omega)$. The summation $Z(\omega)$ denotes the partition function, {\em i.e.,} the sum over all trajectories. Directly optimising the objective above is intractable as we have no access to the analytical form of $\mu_\omega^t$, which is a result of the entanglement between the mean field and the policy in MFGs. Fortunately, we can bypass such entanglement by establishing an unbiased estimate of $\mu^t_E$:
\begin{equation}\label{eq:empirical-mf-mfirl}
	\hat{\mu}^t_E(s) = \frac{1}{M} \sum_{j=1}^M \mathds{1}_{s_j^t = s}.
\end{equation}  
Proven in \citep[Theorem~2]{chen2023adversarial}, with $\mu_\omega^t$ being substituted with $\hat{\mu}^t_E$, the solution to the above optimisation problem approaches the optimal reward parameter if the number of demonstrated trajectories is sufficiently large.

However, computing $Z_\omega$ is intractable if the state-action space is large. To address this issue, MF-AIRL takes the mechanism of adversarial IRL \citep{fu2018learning} %to realise an efficient sampling-based approximation to mean field IRL, 
and reframes Eq.~\eqref{eq:MFIRL} as 
optimising a  {\em generative adversarial network} \citep{goodfellow2014generative}. It uses a discriminator $D_\omega$ (a binary classifier) and a sequence of {\em adaptive samplers} $\piv_\theta$ (a policy flow) whose update is equivalent to improving a sampling-based approximation to $Z_\omega$. Particularly, $D_\omega$ takes the form of
%\begin{equation*}\label{eq:AIRL}
	$D_\omega(s,a,\hat{\mu}_E^t) = \exp\left( f_\omega(s,a, \hat{\mu}^t_E)\right) / (\exp\left( f_\omega(s,a,\hat{\mu}^t_E)\right) + \pi_\theta^t(a \vert s)),$
%\end{equation*}  
%where the adaptive sampler $\pi_\theta(a \vert s)$ is pre-computed as input to the discriminator. 
where $f_\omega$ serves as the parameterised reward function. The update of $D_\omega$ is interleaved with the update of $\piv_\theta$: $D_\omega$ is trained to update the reward function by distinguishing between the trajectories sampled from the expert and the adaptive samplers, {\em i.e.,} to maximise $\Ebb_{\muv_E,\piv_E}[\log D_\omega] + \Ebb_{\piv_\theta}[\log (1 - D_\omega)]$; while $\piv_\theta$ is trained to maximise the entropy-augmented cumulative rewards $\Ebb_{\pi_\theta} \left[\log D_\omega - \log \left(1 - D_\omega \right) \right] = \Ebb_{\piv_\theta} \left[ f_\omega(s^t,a^t,\hat{\mu}^t_E) - \log \pi_\theta^t(a^t \vert s^t)\right].$ %Overall, the training objective of MF-AIRL is
%$$\min_\theta \max_\omega \Ebb_{\muv^E,\piv^E}\left[ \log D_\omega\right] + \Ebb_{\piv_\theta}\left[ \log (1 - D_\omega)\right].$$
Under certain conditions, $f_\omega$ will recover the underlying reward function $r(s,a,\mu)$ \citep{chen2023adversarial}.

\section{Meta-Mean Field IRL with Probabilistic Context Variables}
%This section introduces the concept of MFGs with probabilistic context variables and formalises our target problem.
\subsection{MFGs with Probabilistic Context Variables}
We extend the concept of MFGs by introducing a probabilistic contextual variable $m \in \Mmc$, which follows a prior distribution $p(m)$. Here, $\Mmc$ represents a discrete value space. MFGs with different values of $m$ can be viewed as multiple large-scale multi-agent tasks with a shared structure. A real-world example is taxi hailing, where each taxi is associated with a specific $m$ representing the driver's trip preferences.
As a result, the policy, mean field, and reward function now have dependencies on $m$, denoted as $\pi^t:\Smc \times \Mmc \to \Delta(\Amc)$, $\mu^t: \Mmc \to \Delta (\Smc)$, and $r:\Smc \times \Amc \times \Delta(\Smc) \times \Mmc \to \Rbb$, respectively. We assume independence among the state-action space, transition function \citep{finn2017model, rakelly2019efficient}, and initial mean field w.r.t. $m$, which leads to the following expression for the trajectory distribution conditioned on the additional context variable: $p_{\muv,\piv}(\tau \vert m) =$
\begin{equation*}
	\mu^0(s^0) \prod_{t=0}^T \pi(a^t \vert s^t, m) P(s^{t+1} \vert s^t,a^t,\mu^t(\cdot \vert m)).
\end{equation*}

Given the insights above and following the spirit of entropy-regularised MFG, our objective is to maximise the expected return over the additional probabilistic contextual variable while maintaining the consistency between the mean field flow and policy flow. This can be formulated as the following constrained optimisation problem:
\begin{equation}\label{eq:new_reward_obj}
	\begin{aligned}
		&\max_{\muv, \piv} \Ebb_{m,\tau} \left[ \sum_{t=0}^T  r(s^t, a^t, \mu^t,m)  - \log \pi^t(a^t \vert s^t,m) \right]\\
		 & \qquad\qquad \text{\em where } m \sim p(m), \tau \sim p_{\muv,\piv}(\tau \vert m) \\
		& \text{s.t. } \muv(s \vert m) \text{\em is consistent with } \piv(a \vert s, m), \forall m \in \Mmc.
	\end{aligned}
\end{equation}
Intuitively, for all $m\in\Mmc$, $\left(\muv(s \vert m), \piv(a \vert s, m)\right)$ constitutes the ERMFNE of the MFG specified by $m$.

\subsection{Problem Setup}\label{sec:setup}
We now introduce the problem of meta-mean field IRL (meta-MFIRL) with multi-task demonstrations. Let us consider the existence of a ground-truth reward function $r(s,a,\mu,m)$, a prior distribution $p(m)$, and a pair of $m$-conditioned mean field flow and policy flow $(\muv_E(\cdot \vert m), \piv_E(\cdot \vert m))$ that solves the constrained optimisation problem in Eq.~\eqref{eq:new_reward_obj}. Given a collection of demonstrated trajectories that are {\em i.i.d.} samples drawn from the resulting marginal distribution $p_{\muv_E, \piv_E}(\tau) = \sum_{m \in \Mmc} p(m)p_{\muv_E, \piv_E}(\tau \vert m)$, our objective is to meta-learn an inference model $q(m \vert \tau)$ and a reward function $f(s,a,\mu,m)$. The aim is to ensure that when presented with a new trajectory $\tau_E$ generated by sampling $m' \sim p(m)$ and $\tau_E \sim p_{\muv_E, \piv_E}(\tau \vert m')$, and with $\hat{m}$ being inferred as $\hat{m} \sim q(m \vert \tau_E)$, both $r(s,a,\mu,\hat{m})$ and $f(s,a,\mu,\hat{m})$ yield the same solution to the problem in Eq.~ \eqref{eq:new_reward_obj}. To illustrate dependencies between variables, we depict the graphic model underlying the meta-MFIRL problem in Fig.~\ref{fig:graphic_model}.

It is important to note that we assume no access to the prior task distribution $p(m)$, the underlying $m$ value for each trajectory, nor the transition dynamics $P(s' \vert s, a, \mu)$. Additionally, we suppose that the entire supervision comes solely from the demonstrated data, meaning that requesting additional demonstrations is not allowed.

\begin{figure}
	\centering
	\includegraphics[width=.4\textwidth]{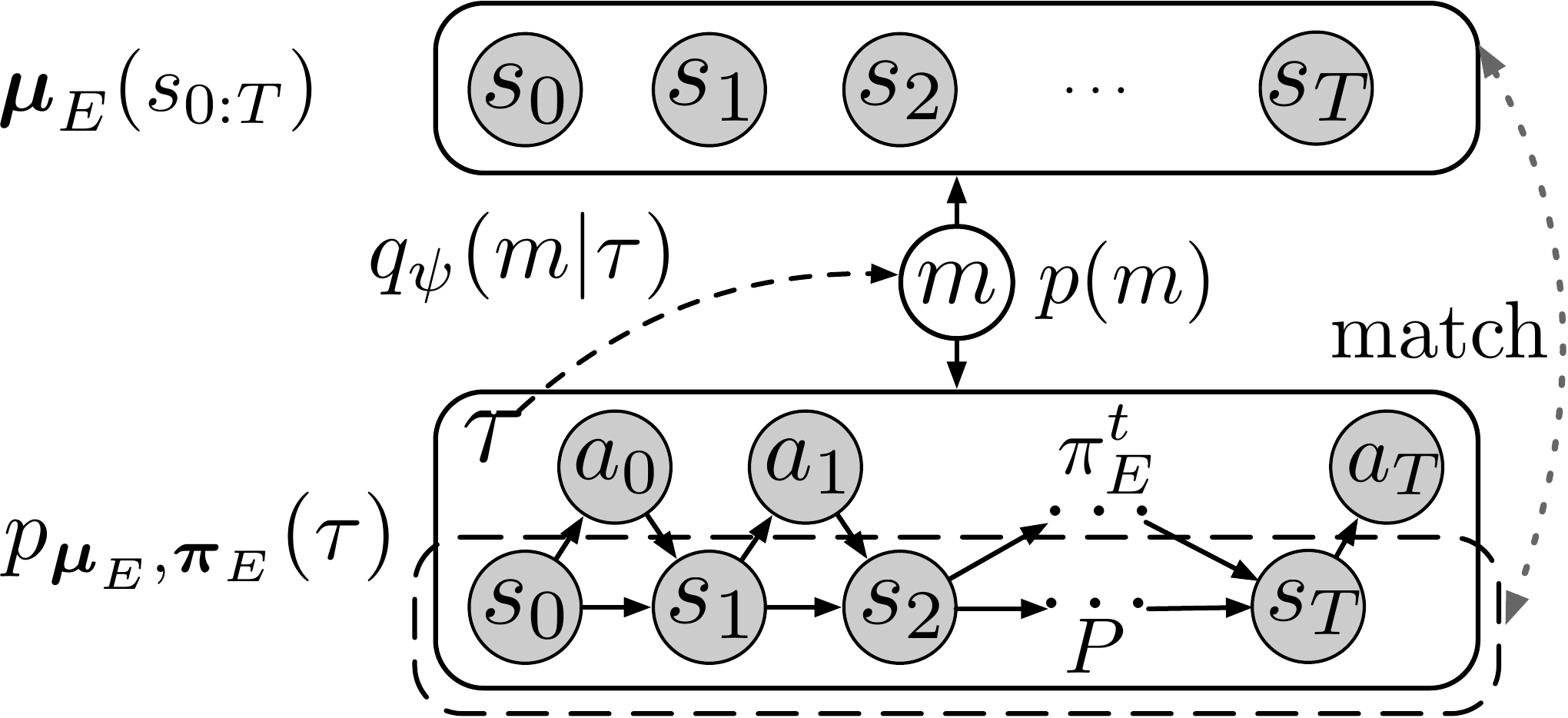}
	\caption{The graphic model of the meta-MFIRL problem. Note that the mean field (population's state density) matches an individual's state marginal when executing $\piv_E$ under $\muv_E$.}\label{fig:graphic_model}
\end{figure}

\subsection{Mutual Information Regularisation over Context Variables and the Reward Function}

Under MF-AIRL, we use a $\psi$-parameterised context variable inference model $q_\psi(m \vert \tau)$, an $\omega$-parameterised reward function $f_\omega(s,a,\mu,m)$.  With $m$ being inferred by $q_\psi(m \vert \tau)$, let $(\muv_\omega(\cdot \vert m), \piv_\omega(\cdot \vert s,m))$ denote the ERMFNE induced by $f_\omega$. The trajectory distribution under $(\muv_\omega, \piv_\omega)$ is given by:
\begin{equation}\label{eq:trajectory_theta_given_m}
	p_\omega(\tau \vert m) = \frac{1}{Z(\omega)}\left[ \prod_{t=0}^T \mu_\omega^t(s^t) \right] \cdot e^{\sum_{t=0}^T f_\omega\left(s^t, a^t, \mu_\omega^t, m \right)},
\end{equation}
where $\muv_\omega$ denotes the induced mean field flow consistent with $\piv_\omega$, and $Z(\omega)$ is the partition function.%, {\em i.e.,} the sum over all possible trajectories. 

%Without further constraints on the context variable $m$, directly applying MFIRL (incorporating the pre-inferred $m$ by $q_\psi$ as a constant in each component of MFIRL and then maximise the likelihood with respect to the conditional distribution defined in Eq.~\eqref{eq:trajectory_theta_given_m}) would simply ignore $m$, making $\omega$ independent of $m$. Consequently, the learned reward function will be indistinguishable among different tasks. Therefore, we must establish a connection between the reward function and the context variable. Following \citep{yu2019meta} and from the perspective of information theory, we measure such a connection using the mutual information between the context variable $m$ and the trajectories $\tau$ sampled from the reward-induced distribution, which quantifies the amount of information gained about the context variable by observing the trajectories and hence serves as an ideal measure.

Directly applying MF-AIRL without further constraints on the context variable $m$ would result in the pre-inferred $m$ by $q_\psi$ being treated as a constant in each component of MF-AIRL. As a result, maximising the likelihood with respect to the conditional distribution defined in Eq.~\eqref{eq:trajectory_theta_given_m} would lead to $m$ being ignored, making the learned reward function $f_\omega$ independent of $m$. Consequently, the learned reward function would be indistinguishable among different tasks. To establish a connection between the reward function and the context variable, it is essential to consider the mutual information between $m$ and the trajectories $\tau$ sampled from the reward-induced distribution. This measure quantifies the amount of information gained about the context variable by observing the trajectories, serving as an ideal measure, as done in \citep{zhao2018information,yu2019meta} from the perspective of information theory.
The mutual information between $m$ and $\tau$ under the joint distribution $p_\omega(m,\tau) = p(m) p_\omega(\tau \vert m)$ is given by:
\begin{equation*}\label{eq:mutual_information}
	I_{p_\omega}(m ; \tau) = \Ebb_{m\sim p(m), \tau \sim p_\omega(\tau \vert m)} \left[ \log p_\omega(m \vert \tau) - \log p(m) \right],
\end{equation*}                                                      
where $p_\omega(m \vert \tau)$ is the posterior distribution corresponding to the conditional distribution $p_\omega(\tau \vert m)$ as defined in Eq.~\eqref{eq:trajectory_theta_given_m}.

Optimising the mutual information is intractable as we have no access to the prior distribution $p(m)$, the posterior  $p(m \vert \tau)$, nor the $\omega$-induced conditional mean field $\mu_\omega^t(\cdot \vert m)$. To address this, we employ several approximations based on sampling and estimation. First, we replace $\mu_\omega^t(\cdot \vert m)$ with an empirical value estimated from demonstrations using $q_\psi(m \vert \tau)$, shifting its dependency from $\omega$ to $\psi$. Second, we use $q_\psi(m \vert \tau)$ as a variational approximation to $p_\omega(m \vert \tau)$, allowing for approximate sampling from $p(m)$. These approximations will be further explained in the next section. Now, our final goal in meta-MFIRL is to learn a reward function and an inference model for the task type. Formally, this can be interpreted as the pursuit of the following two desiderata:
\begin{enumerate}
	\item {\bf Reward desideratum.} Match conditional distributions: $$\Ebb_{m\sim p(m)}[D_{\KL}\left(p_{\muv_E, \piv_E}(\tau \vert m)|| p_\omega(\tau \vert m)\right)] = 0;$$
	\item {\bf Context desideratum.} Match posterior distributions: $$\Ebb_{\tau \sim p_\omega(\tau)}[D_{\KL}\left(p_\omega(m \vert \tau)|| q_\psi(m \vert \tau)\right)] = 0.$$
\end{enumerate}
The reward desideratum aligns with the objective in MF-AIRL, encouraging the trajectory distribution induced by the learned reward function to match the distribution of demonstrations. The context desideratum ensures a match between the context variable inference model and the posterior distribution induced by the learned reward function, facilitating $q_\psi(m \vert \tau)$ to serve as a suitable variational approximation to the unknown $p_\omega(m \vert \tau)$. This allows correct inference of the context variable from a new demonstrated trajectory. Combining mutual information as the optimisation objective and the two aforementioned desiderata as constraints, we arrive at the final formula for our target optimization problem, with its Lagrangian dual function being:
\begin{equation*}
\begin{aligned}
	\min_{\omega, \psi}   &~ \alpha \cdot \Ebb_{m\sim p(m)}[D_{\KL}\left(p_{\muv_E, \piv_E}(\tau \vert m)|| p_\omega(\tau \vert m)\right)] + \\
	  &~ \beta \cdot \Ebb_{\tau \sim p_\omega(\tau)}[D_{\KL}\left(p_\omega(m \vert \tau)|| q_\psi(m \vert \tau)\right)] -I_{p_\omega(m;\tau)}.
\end{aligned}
\end{equation*}

By fixing Lagrangian multipliers to specific values ($\alpha = \beta = 1$) that confirm the incentive to maximise the mutual information \citep{zhao2018information,yu2019meta}, we can rewrite the above Lagrangian dual function as
\begin{equation}\label{eq:Linfo}
\begin{aligned}
	 \min_{\omega, \psi} ~& \Ebb_{m \sim p(m)}[D_{\KL}\left(p_{\muv_E, \piv_E}(\tau \vert m)|| p_\omega(\tau \vert m)\right)] - \\
	 & \Ebb_{m \sim p(m), \tau \sim p_\omega(\tau \vert m)}[\log q_\psi(m \vert \tau)].%\Lmc_{\text{info}}(\omega, \psi),  
	%& ~~~~~~~~~-\Ebb_{m \sim p(m), \tau \sim p_\omega(\tau \vert m)}[\log p(m)]\\ 
	%\triangleq  & \argmin_{\omega, \psi} \Ebb_{m \sim p(m)}[D_{\KL}\left(p_{\muv_E, \piv_E}(\tau \vert m)|| p_\omega(\tau \vert m)\right)] - \Lmc_{\text{info}}(\omega, \psi).
\end{aligned}
\end{equation}
%\begin{equation}
%	\qquad\Lmc_{\text{info}}(\omega, \psi) = \Ebb_{m \sim p(m), \tau \sim p_\omega(\tau \vert m)}[\log q_\psi(m \vert \tau)].
%\end{equation}
The detailed derivation is given in Appendix~\ref{app:target-original}.

\section{Probabilistic Embeddings for Meta-MFIRL}% via A Series of Approximations}
To optimise the objective in Eq.~\eqref{eq:Linfo} tractably, we need to replace the unknown $\mu_\omega^t(\cdot \vert m)$ (part of $p_\omega(\tau \vert m)$) with a known value that retains optimality in ideal conditions. One approach is to imitate the operation in MF-AIRL, using an estimate $\hat{\mu}_E^t(\cdot \vert m)$ and substituting it for $\mu_\omega^t(\cdot \vert m)$ (Eq.\eqref{eq:empirical-mf-mfirl}). However, the dependency of $\mu_\omega^t(\cdot \vert m)$ on $m$ prevents this substitution, as we lack access to the posterior distribution $p(m \vert \tau)$.
Fortunately, we can use $q_\psi(m \vert \tau)$ as a variational approximation to $p(m \vert \tau)$ to account for the uncertainty over tasks. Using this approximation, we construct an estimate of $\mu_E^t(\cdot \vert m)$ according to the following rule:
\begin{equation}\label{eq:empirical-mf}
	\hat{\mu}_\psi^t(s \vert m) = \Ebb_{\tau_E \sim p_{\muv_E, \piv_E}(\tau)} \left[ q_\psi(m \vert \tau_E) \cdot \mathds{1}_{s^t = s} \right],
\end{equation}
which is unbiased when $\psi$ is trained to the optimality. 
Note that the replacement of $\mu_\omega^t(\cdot \vert m)$ with $\hat{\mu}_\psi^t(s \vert m)$ will make $p_\omega(\tau \vert m)$ additionally depend on $\psi$. We rewrite the resulting conditional distribution as 
\begin{equation*}\label{eq:p_omega_psi}
	p_{\omega,\psi}(\tau \vert m) = \frac{1}{Z(\omega,\psi)}\left[ \prod_{t=0}^T \hat{\mu}_\psi^t(s^t) \right] \cdot e^{\sum_{t=0}^T f_\omega\left(s^t, a^t, \hat{\mu}_\psi^t, m \right)}.
\end{equation*}
Accordingly, our original target problem in Eq.~\eqref{eq:Linfo} now takes the following form:
\begin{equation}\label{eq:Lkl-Linfo}
	\min_{\omega,\psi} \Kmc(\omega,\psi) - \Lmc(\omega,\psi)
\end{equation}
\begin{equation*}
\begin{aligned}
	\Kmc(\omega,\psi) &= \Ebb_{m \sim p(m)}[D_{\KL}\left(p_{\muv_E, \piv_E}(\tau \vert m)|| p_{\omega,\psi}(\tau \vert m)\right)],\\
	\Lmc(\omega, \psi) &= \Ebb_{m \sim p(m), \tau \sim p_{\omega,\psi}(\tau \vert m)}[\log q_\psi(m \vert \tau)].
\end{aligned}
\end{equation*}

We next introduce how to approximately optimise the objective in Eq.~\eqref{eq:Lkl-Linfo} with sampling-based gradient estimation. First, we observe that $\Kmc(\omega,\psi)$ with fixed $\psi$ can be maximised using the adversarial framework in MF-AIRL, where the adaptive sampler (policy) takes the forms of $\pi_\theta(a \vert s,m)$. This observation can be formally stated in the following lemma that will be useful to derive the gradients of $\Lmc(\omega,\psi)$.

\begin{algorithm}[!htp]
   \caption{PEMMFIRL Meta-Training}\label{alg:PEMMFIRL}
   %{\bf Meta-Training} %(Meta-Test is given in Appendix~\ref{app:meta-test)
\begin{algorithmic}[1]
   \STATE {\bf Input:} Expert trajectories $\Dmc_E = \{ \tau_j \}_{j = 1}^M$.
   \STATE {\bf Initialisation:} Parameters $f_\omega$, $q_\psi$, $\piv_\theta$.
   %\STATE Estimate the empirical expert MF flow $\hat{\muv}^E$ from $\Dmc_E$. % according to Eq.~\eqref{eq:est_mf}.
   \FOR{each iteration}
   		\STATE Sample two set of trajectories $\tau_E, \tau_E' \sim 
   		\Dmc_E$.
   		\STATE Infer a batch of context variables $\tilde{m} \sim q_\psi(m \vert \tau_E)$.
   		\STATE Estimate a batch of mean fields $\hat{\mu}_\psi(s \vert \tilde{m})$ with Eq.~\eqref{eq:empirical-mf}.
   		\STATE Generate a set of trajectories $\Dmc$ using $\piv_\theta(a \vert s,\tilde{m})$ and $\hat{\mu}_\psi(s \vert \tilde{m})$ with the fixed $\tilde{m}$ for each trajectory.
   		\STATE Update $\psi$ to decrease $\Kmc(\omega,\psi) - \Lmc(\omega,\psi)$ with gradients $\frac{\partial \Kmc}{\partial \psi}-\frac{\partial \Lmc}{\partial \psi}$ estimated on $\Dmc$.
   		\STATE Update $\omega$ to increase  $\Lmc(\omega,\psi)$ with $\frac{\partial \Lmc}{\partial \omega}$ estimated on $\Dmc$.
   		\STATE Update $\omega$ to increase following objective:\\ $\Ebb_{\tilde{m},\tau_E'}[\sum_{t=0}^T \log D_\omega ] + \Ebb_{\tilde{m}, \tau \sim \Dmc}[\sum_{t=0}^T \log (1 - D_\omega)]$.
   		\STATE Update $\theta$ with RL to increase the following objective: $\Ebb_{\tilde{m},\tau_E'}[\sum_{t=0}^T f_\omega - \log \pi_\theta^t(a^t \vert s^t,\tilde{m})]$.
   \ENDFOR
   \STATE {\bfseries Output:} Reward function $f_{\omega}$ and inference model $q_\psi$.
\end{algorithmic}
%{\bf Meta-Test}
%\begin{algorithmic}[1]
%	\STATE {\bf Input:} A context variable $m \sim p(m)$, an expert demonstration $\tau_E \sim p_{\muv_E,\piv_E}(\tau \vert m)$ and the ground-truth $r(s,a,\mu,m)$.
%	\STATE Infer the context variable by $\hat{m} \sim q_\psi(m \vert \tau_E)$.
%	\STATE Optimise a policy w.r.t. the reward $r(s,a,\mu,\hat{m})$.
%	\STATE Evaluate the learned policy with $r(s,a,\mu,m)$.
%\end{algorithmic}
\end{algorithm}

\begin{lemma}\label{lem:MFIRL}
	Let the adversarial framework of MF-AIRL take the adaptive samplers $\pi_\theta^t(a \vert s,m)$ and the discriminator $D_\omega(s,a,\hat{\mu}_\psi^t, m)=\frac{\exp\left( f_\omega(s,a, \hat{\mu}_\psi^t)\right)}{\exp\left( f_\omega(s,a,\hat{\mu}_\psi^t) + \pi_\theta(a \vert s,m)\right)}$. If the adaptive samplers are trained to the optimility $\piv_\theta^*$ w.r.t. the reward signal $\log D_\omega - \log (1-D_\omega)$, then the trajectory distribution induced by $(\hat{\muv}_\psi, \piv_\theta^*)$ matches the conditional distribution $p_{\omega,\psi}(\tau \vert m)$, i.e., $p_{\hat{\muv}_\psi, \piv_\theta^*}(\tau \vert m) = p_{\omega,\psi}(\tau \vert m)$.
\end{lemma}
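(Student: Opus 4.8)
The plan is to treat the context variable $m$ as a fixed conditioning parameter and thereby reduce the claim to the known correctness of MF-AIRL on the single MFG instance indexed by $m$. For each fixed $m \in \Mmc$, the triple consisting of $f_\omega(\cdot,m)$, the sampler $\pi_\theta^t(\cdot \mid \cdot, m)$, and the mean-field estimate $\hat{\mu}_\psi^t(\cdot \mid m)$ instantiates precisely the adversarial setup recalled in the preliminaries, with $\hat{\mu}_\psi^t(\cdot \mid m)$ playing the role of the empirical estimate $\hat{\mu}_E^t$ of Eq.~\eqref{eq:empirical-mf-mfirl}. The first step is therefore to confirm that the signal driving $\piv_\theta$ is exactly the entropy-augmented reward of this MFG.

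To this end I would substitute the discriminator into the sampler's reward $\log D_\omega - \log(1-D_\omega)$; the common factor $\exp f_\omega + \pi_\theta^t(a \mid s,m)$ cancels, leaving $\log D_\omega - \log(1-D_\omega) = f_\omega(s,a,\hat{\mu}_\psi^t,m) - \log \pi_\theta^t(a \mid s,m)$. Hence training $\piv_\theta$ to maximise the expected return under this signal solves the entropy-regularised control problem of Eq.~\eqref{eq:entropy_obj} for the MFG specified by $m$, with reward $f_\omega(\cdot,m)$ and with the mean field \emph{held fixed} at $\hat{\muv}_\psi(\cdot \mid m)$. Fixing $\hat{\muv}_\psi$ is the conceptual pivot: it decouples the mean field from the policy, so the otherwise coupled equilibrium problem collapses to an ordinary soft (maximum-entropy) control problem with a prescribed state occupancy.

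I would then invoke the soft-RL optimality characterisation used in AIRL and MF-AIRL: the maximiser $\piv_\theta^*$ is the Boltzmann policy $\pi_\theta^{*,t}(a \mid s,m) \propto \exp f_\omega(s,a,\hat{\mu}_\psi^t,m)$ associated with the soft $Q$-function of this fixed-mean-field problem. Substituting $\piv_\theta^*$ into the product-form factorisation of the conditional trajectory distribution recalled in the preliminaries, $p_{\hat{\muv}_\psi,\piv}(\tau \mid m) = \prod_{t=0}^T \hat{\mu}_\psi^t(s^t)\,\pi^t(a^t \mid s^t, m)$, turns the product of per-step Boltzmann factors into $\exp(\sum_{t=0}^T f_\omega)$ weighted by the occupancy prior $\prod_{t=0}^T \hat{\mu}_\psi^t(s^t)$, which is the energy-based form of $p_{\omega,\psi}(\tau \mid m)$ in Eq.~\eqref{eq:p_omega_psi}, with all accumulated normalisers folded into $Z(\omega,\psi)$.

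The step I expect to be the main obstacle is making the partition-function bookkeeping exact. The soft value function introduces time- and state-dependent normalising terms in $\pi_\theta^{*}$, and I must show that, once combined with the fixed-mean-field occupancy and the product-form factorisation, these collapse into the single global constant $Z(\omega,\psi)$, so that $p_{\hat{\muv}_\psi,\piv_\theta^*}(\tau \mid m)$ and $p_{\omega,\psi}(\tau \mid m)$ coincide as normalised measures rather than merely agreeing up to a state-dependent factor. This is precisely where the consistency identity of Eq.~\eqref{eq:MKV} is needed, since it lets $\hat{\muv}_\psi$ serve as the next-state occupancy so that the soft value contributions telescope and the $\prod_{t=0}^T \hat{\mu}_\psi^t(s^t)$ prior in the target is reproduced. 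Once this normalisation is settled for a single fixed $m$, the lemma follows, the dependence on $m$ being inert because every object is conditioned on the same value throughout.
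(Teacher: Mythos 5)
You should know at the outset that the paper itself contains no proof of Lemma~\ref{lem:MFIRL}: unlike Propositions~\ref{prop:Linfo} and~\ref{prop:Kinfo}, it has no appendix section and is imported wholesale from the MF-AIRL framework of \citet{chen2023adversarial}, so your argument has to stand on its own. Its first half does: the algebra $\log D_\omega - \log(1-D_\omega) = f_\omega(s,a,\hat{\mu}_\psi^t,m) - \log\pi_\theta^t(a\vert s,m)$ is correct, and since both $p_{\hat{\muv}_\psi,\piv_\theta}(\tau\vert m)$ and $p_{\omega,\psi}(\tau\vert m)$ carry the same factor $\prod_t \hat{\mu}_\psi^t(s^t)$, maximising the entropy-augmented return is exactly minimising $D_{\KL}\left(p_{\hat{\muv}_\psi,\piv_\theta}(\tau\vert m)\,\|\,p_{\omega,\psi}(\tau\vert m)\right)$ up to the additive constant $\log Z(\omega,\psi)$. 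That KL-projection view is the right skeleton.

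The gap is in your final paragraph, precisely at the step you flagged. Your plan is to invoke the MKV consistency identity of Eq.~\eqref{eq:MKV} so that the soft-value normalisers ``telescope'' into a single global $Z(\omega,\psi)$. This cannot work. First, $\hat{\muv}_\psi$ is by construction an empirical average over \emph{expert} trajectories (Eq.~\eqref{eq:empirical-mf}); it bears no consistency relation to the sampler $\piv_\theta^*$ under the dynamics $P$ --- decoupling the mean field from the learned policy is the entire point of the substitution, so there is no MKV identity available to you. Second, once the state marginals are frozen at $\hat{\mu}_\psi^t$, the trajectory distribution factorises across time steps, so the problem is not sequential at all: there is no soft $Q$-function, and the optimiser is the per-step Boltzmann policy $\pi^{*,t}(a\vert s,m) = \exp f_\omega(s,a,\hat{\mu}_\psi^t,m)/C^t(s)$ with $C^t(s)=\sum_{a'}\exp f_\omega(s,a',\hat{\mu}_\psi^t,m)$ (your sketch conflates this with ``Boltzmann in the soft $Q$-function,'' which would be the answer under the dynamics-induced objective --- a different distribution that involves $P$ factors and departs from $p_{\omega,\psi}$ even further). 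Plugging the per-step Boltzmann policy back in, $p_{\hat{\muv}_\psi,\piv_\theta^*}(\tau\vert m)$ and $p_{\omega,\psi}(\tau\vert m)$ differ by the factor $Z(\omega,\psi)/\prod_t C^t(s^t)$, and a short Jensen argument shows the minimised KL equals $\sum_t\left(\log \Ebb_{s\sim\hat{\mu}_\psi^t}[C^t(s)] - \Ebb_{s\sim\hat{\mu}_\psi^t}[\log C^t(s)]\right)$, which vanishes if and only if each $C^t$ is constant on the support of $\hat{\mu}_\psi^t$. So the lemma's exact equality rests on a realisability idealisation (state-independent action normalisers, i.e., the energy-based model lies inside the family of distributions with state marginals $\hat{\mu}_\psi^t$) inherited from AIRL/MF-AIRL; otherwise $\piv_\theta^*$ is only the information projection onto that family. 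That assumption must be stated --- no appeal to Eq.~\eqref{eq:MKV} can manufacture it.
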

Lemma~\ref{lem:MFIRL} tells us we can instead use $\piv_\theta^*$  to generate trajectory samples from the energy-based model $p_{\omega,\psi}(\tau \vert m)$ that is difficult to directly sample.  Now we are ready to estimate the gradients of $\Lmc(\omega,\psi)$ w.r.t. $\omega$ and $\psi$.

\begin{proposition}\label{prop:Linfo}
	With $m\sim p(m)$, $\hat{\tau}$ and $\hat{\tau}'\sim p_{\hat{\muv}_\psi, \piv_\theta^*}(\tau \vert m)$ and $f_\omega$ denoting $f_\omega(s^t,a^t,\hat{\mu}^t_\psi,m)$ for short, the gradients of $\Lmc(\omega, \psi)$ w.r.t. $\omega$ and $\psi$ can be estimated with:
	\begin{equation*}
	\small
	\begin{aligned}
		&\Ebb_{m,\hat{\tau}} \left[\log q_\psi(m \vert \hat{\tau}) \left[  \sum_{t=0}^T \frac{\partial f_\omega}{\partial \omega}  -\Ebb_{\hat{\tau}'}\left[\sum_{t=0}^T \frac{\partial f_\omega}{\partial \omega}   \right] \right]\right] \text{ and } \Ebb_{m,\hat{\tau}} \Big[\\
		& \log q_\psi(m \vert \hat{\tau}) (\kappa(\hat{\tau},m)-\Ebb_{\hat{\tau}'}[\kappa(\hat{\tau}',m)] ) +  \frac{\partial \log q_\psi(m \vert \hat{\tau}) }{\partial \psi}  \Big],
	\end{aligned}
	\end{equation*}
	\begin{equation*}
	\small
		\text{ where } \kappa(\tau, m) = \sum_{t=0}^T \left[ \left(\frac{\partial f_\omega}{\partial \hat{\mu}_\psi^t} + \frac{1}{\hat{\mu}_\psi^t(s^t \vert m)}\right) \frac{\partial \hat{\mu}_\psi^t(s^t \vert m)}{\partial \psi} \right]
	\end{equation*}
\end{proposition}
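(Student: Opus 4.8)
The plan is to derive both gradients with the score-function (log-derivative) identity, reducing each to an expectation under the energy-based model $p_{\omega,\psi}(\tau \vert m)$, and then to invoke Lemma~\ref{lem:MFIRL} to replace the intractable samples from this model with trajectories drawn from $(\hat{\muv}_\psi, \piv_\theta^*)$. Writing $\Lmc(\omega,\psi) = \Ebb_{m\sim p(m)}\int p_{\omega,\psi}(\tau\vert m)\log q_\psi(m\vert\tau)\,d\tau$, I note that $\omega$ enters only through the sampling density $p_{\omega,\psi}$, whereas $\psi$ enters both through $p_{\omega,\psi}$ and through the integrand $\log q_\psi$. This asymmetry dictates a single term for the $\omega$-gradient and a product rule for the $\psi$-gradient.

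For the $\omega$-gradient I would apply $\partial_\omega p_{\omega,\psi} = p_{\omega,\psi}\,\partial_\omega \log p_{\omega,\psi}$ to get $\partial_\omega\Lmc = \Ebb_{m,\tau\sim p_{\omega,\psi}}[\log q_\psi(m\vert\tau)\,\partial_\omega\log p_{\omega,\psi}(\tau\vert m)]$. Differentiating the log of Eq.~\eqref{eq:trajectory_theta_given_m} (with $\hat{\mu}_\psi$ in place of $\mu_\omega$), the $\prod_t\hat{\mu}_\psi^t$ prefactor is $\omega$-free, so only $\sum_t f_\omega$ and $-\log Z(\omega,\psi)$ contribute. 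The crucial step is to show $\partial_\omega\log Z = \Ebb_{\tau'\sim p_{\omega,\psi}}[\sum_t \partial_\omega f_\omega]$, which follows by exchanging differentiation with the $\tau$-integral defining $Z$ and recognising the normalised integrand as $p_{\omega,\psi}$ itself. This yields the control-variate subtraction $\sum_t \partial_\omega f_\omega - \Ebb_{\hat{\tau}'}[\sum_t\partial_\omega f_\omega]$, matching the first claimed estimator once Lemma~\ref{lem:MFIRL} is used to realise both the outer sample $\hat{\tau}$ and the inner baseline sample $\hat{\tau}'$.

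For the $\psi$-gradient the product rule produces two pieces. The integrand piece gives $\Ebb_{m,\tau}[\partial_\psi\log q_\psi(m\vert\tau)]$ directly. The density piece is $\Ebb_{m,\tau}[\log q_\psi(m\vert\tau)\,\partial_\psi\log p_{\omega,\psi}(\tau\vert m)]$, and here $\psi$ acts through $\hat{\mu}_\psi^t$ in two coupled locations: the multiplicative prefactor $\prod_t\hat{\mu}_\psi^t(s^t\vert m)$, contributing $\sum_t \frac{1}{\hat{\mu}_\psi^t(s^t\vert m)}\partial_\psi\hat{\mu}_\psi^t(s^t\vert m)$, and the argument of $f_\omega$, contributing $\sum_t\frac{\partial f_\omega}{\partial\hat{\mu}_\psi^t}\partial_\psi\hat{\mu}_\psi^t(s^t\vert m)$ via the chain rule. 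Their sum is exactly $\kappa(\tau,m)$. As before, differentiating $-\log Z(\omega,\psi)$ supplies the baseline $-\Ebb_{\tau'}[\kappa(\tau',m)]$, so the density piece is $\Ebb_{m,\tau}[\log q_\psi(m\vert\tau)(\kappa(\tau,m)-\Ebb_{\tau'}[\kappa(\tau',m)])]$; combining with the integrand piece and invoking Lemma~\ref{lem:MFIRL} recovers the second claimed estimator.

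I expect the main obstacle to be the bookkeeping around the partition function and the coupled appearances of $\hat{\mu}_\psi$. Concretely, one must verify that $Z(\omega,\psi)$ is differentiable under the integral sign and that its log-gradient collapses cleanly into an expectation under $p_{\omega,\psi}$ acting as a baseline in both cases; one must also be careful that, because $\psi$ enters $\hat{\mu}_\psi$ simultaneously as a density prefactor and inside $f_\omega$, the two chain-rule contributions combine into the single factor $\frac{\partial f_\omega}{\partial\hat{\mu}_\psi^t}+\frac{1}{\hat{\mu}_\psi^t(s^t\vert m)}$ before multiplication by $\partial_\psi\hat{\mu}_\psi^t$. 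A minor but essential subtlety is that the inner baseline expectations are conditional on the same $m$ as the outer sample, which must be respected so that the baselines remain valid mean-zero control variates.
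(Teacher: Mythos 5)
Your proposal is correct and follows essentially the same route as the paper's own proof: the score-function (log-derivative) trick under $p_{\omega,\psi}(\tau\vert m)$, the decomposition of $\log p_{\omega,\psi}$ into the $\hat{\mu}_\psi$-prefactor, the energy $\sum_t f_\omega$, and $-\log Z(\omega,\psi)$, the identity $\partial\log Z = \Ebb_{\tau'\sim p_{\omega,\psi}}[\cdot]$ supplying the baseline, the product rule plus the two chain-rule contributions of $\hat{\mu}_\psi$ combining into $\kappa$, and finally Lemma~\ref{lem:MFIRL} to sample via $(\hat{\muv}_\psi,\piv_\theta^*)$. Indeed, your write-up is slightly cleaner than the appendix, whose last displayed line for the $\omega$-gradient drops the $\log q_\psi(m\vert\tau)$ factor (evidently a typo, since the proposition statement retains it, as do you).
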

\begin{proof}
	See Appendix~\ref{app:Linfo}.
\end{proof}

Note that $\frac{\partial f_\omega}{\partial \hat{\mu}_\psi^t}$ is a function of $\omega$ and depends on the model ({\em e.g.,} neural networks) for $f_\omega$, and calculating $\frac{\partial \hat{\mu}_\psi^t(s^t \vert m)}{\partial \psi}$ is straightforward according to its definition in Eq.~\eqref{eq:empirical-mf}.
Also note that the expectations in both estimates above are taken over the prior  $p(m)$. Since we have no access to $p(m)$ but have a set of demonstrated trajectories, we can generate synthetic samples for $m$ using the following generative process
\begin{equation}\label{eq:generative}
	\tau_E \sim p_{\muv_E, \piv_E}(\tau), \tilde{m} \sim q_\psi(m \vert \tau_E),
\end{equation}  
which matches $p(m)$ when $\omega$ and $\psi$ are trained to optimality. 
We are left to estimate the gradient of $\Kmc(\omega, \psi)$ w.r.t. $\psi$.

\begin{proposition}\label{prop:Kinfo}
	With $\tilde{\tau} \sim p_{\hat{\muv}_\psi, \piv_\theta^*}(\tau \vert \tilde{m})$ and $\tilde{m}$ by Eq.~\eqref{eq:generative}, the gradient of $\Kmc(\omega, \psi)$ w.r.t. $\psi$ can be estimated with:
	\begin{equation*}
		\Ebb_{\tau_E, \tilde{m}}[\Ebb_{\tilde{\tau}}[ \kappa(\tilde{\tau}, \tilde{m}) ] - \kappa(\tau_E,\tilde{m}) ],
	\end{equation*}
\end{proposition}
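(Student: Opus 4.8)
The plan is to differentiate $\Kmc(\omega,\psi)$ directly in $\psi$ and then render the resulting expression tractable via Lemma~\ref{lem:MFIRL} and the generative process of Eq.~\eqref{eq:generative}. First I would expand the KL term as
\[
D_{\KL}\!\left(p_{\muv_E, \piv_E}(\tau \vert m)\,\|\,p_{\omega,\psi}(\tau \vert m)\right) = \Ebb_{\tau \sim p_{\muv_E, \piv_E}(\tau \vert m)}\!\left[\log p_{\muv_E, \piv_E}(\tau \vert m) - \log p_{\omega,\psi}(\tau \vert m)\right].
\]
Because the sampling distributions $p(m)$ and $p_{\muv_E, \piv_E}(\tau \vert m)$ as well as the first log-term are all independent of $\psi$, the gradient collapses to $\frac{\partial \Kmc}{\partial \psi} = -\Ebb_{m,\tau}\!\left[\tfrac{\partial}{\partial \psi}\log p_{\omega,\psi}(\tau \vert m)\right]$, so the whole task reduces to computing the score of the energy-based model.

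Next I would compute that score. Writing $\log p_{\omega,\psi}(\tau \vert m) = -\log Z(\omega,\psi) + \sum_t \log \hat{\mu}_\psi^t(s^t \vert m) + \sum_t f_\omega(s^t,a^t,\hat{\mu}_\psi^t,m)$, the key point is that $\psi$ enters both the base measure $\prod_t \hat{\mu}_\psi^t$ and the energy $f_\omega$ only through $\hat{\mu}_\psi^t$. Differentiating the last two sums by the chain rule, the base-measure term yields $\tfrac{1}{\hat{\mu}_\psi^t(s^t \vert m)}\tfrac{\partial \hat{\mu}_\psi^t}{\partial \psi}$ and the reward term yields $\tfrac{\partial f_\omega}{\partial \hat{\mu}_\psi^t}\tfrac{\partial \hat{\mu}_\psi^t}{\partial \psi}$, and their sum over $t$ is exactly $\kappa(\tau,m)$. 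For the log-partition term I would invoke the standard identity that the gradient of the log-normaliser equals the expectation (under $p_{\omega,\psi}(\cdot \vert m)$) of the gradient of the log unnormalised density; since that log-density differentiates to $\kappa$ by the same computation, this gives $\tfrac{\partial}{\partial\psi}\log Z = \Ebb_{\tau' \sim p_{\omega,\psi}(\tau'\vert m)}[\kappa(\tau',m)]$. Combining, $\tfrac{\partial}{\partial\psi}\log p_{\omega,\psi}(\tau\vert m) = \kappa(\tau,m) - \Ebb_{\tau'}[\kappa(\tau',m)]$, hence $\frac{\partial \Kmc}{\partial\psi} = \Ebb_{m,\tau}\!\left[\Ebb_{\tau'}[\kappa(\tau',m)] - \kappa(\tau,m)\right]$.

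Finally I would make both expectations computable. The inner sampling $\tau' \sim p_{\omega,\psi}(\tau' \vert m)$ is from an intractable energy-based model, so I apply Lemma~\ref{lem:MFIRL} to replace it by $\tilde{\tau} \sim p_{\hat{\muv}_\psi, \piv_\theta^*}(\tau \vert m)$, which is drawn from the trained adaptive sampler. The outer expectation is over the inaccessible prior $p(m)$ together with $p_{\muv_E, \piv_E}(\tau \vert m)$; here I use Eq.~\eqref{eq:generative}, under which drawing $\tau_E \sim p_{\muv_E, \piv_E}(\tau)$ and then $\tilde{m} \sim q_\psi(m \vert \tau_E)$ reproduces the joint $p(m)\,p_{\muv_E, \piv_E}(\tau \vert m)$ at optimality, identifying $(m,\tau) \leftrightarrow (\tilde{m},\tau_E)$. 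Substituting both replacements delivers the claimed estimator $\Ebb_{\tau_E,\tilde{m}}\!\left[\Ebb_{\tilde{\tau}}[\kappa(\tilde{\tau},\tilde{m})] - \kappa(\tau_E,\tilde{m})\right]$.

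The hard part will be the score computation, specifically the non-standard feature that $\psi$ appears simultaneously in the energy (through $f_\omega$'s dependence on $\hat{\mu}_\psi^t$) \emph{and} in the base measure $\prod_t \hat{\mu}_\psi^t$, unlike in ordinary AIRL where only the energy carries the parameter. One must track both contributions and verify that the $\tfrac{1}{\hat{\mu}_\psi^t}$ term and the $\tfrac{\partial f_\omega}{\partial \hat{\mu}_\psi^t}$ term recombine into precisely $\kappa$ in \emph{both} the score and the log-partition gradient. The two substitutions at the end are the steps that lean on the optimality assumptions ($\piv_\theta^*$ via Lemma~\ref{lem:MFIRL}, and $q_\psi$ matching the true posterior so that Eq.~\eqref{eq:generative} samples the correct joint).
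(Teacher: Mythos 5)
Your proposal matches the paper's own proof essentially step for step: expanding the KL, dropping the $\psi$-independent terms, computing the score of $p_{\omega,\psi}(\tau\vert m)$ so that the base-measure and energy contributions combine into $\kappa$, applying the log-partition gradient identity, and finally substituting samples via Lemma~\ref{lem:MFIRL} and the generative process of Eq.~\eqref{eq:generative}. It is correct, and your explicit remark that the final substitutions rely on the optimality of $\piv_\theta^*$ and $q_\psi$ is a point the paper itself makes only implicitly.
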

\begin{proof}
	See Appendix~\ref{app:Kinfo}.
\end{proof}

Now we are ready to introduce our proposed framework termed Probabilistic Embeddings for Meta-Mean Field IRL (PEMMFIRL) where we alternatively update an $\omega$-parameterised reward function $f_\omega(s,a,\mu,m)$ and a $\psi$-parameterised context variable inference model $q_\psi(m \vert \tau)$. We update $\omega$ by $\frac{\partial}{\partial \omega} \Lmc(\omega, \psi)$ and invoking MF-AIRL to solve $\Kmc(\omega, \psi)$ given the current $\psi$, where we train a discriminator $D_\omega$ given in Lemma~\ref{lem:MFIRL} and adaptive samplers $\piv_\theta$ as 
\begin{equation*}\label{eq:discriminator}
\begin{aligned}
	\max_\omega \Ebb_{\tau_E,\tilde{m}}\left[\sum_{t=0}^T \log D_\omega \right] + \Ebb_{\tilde{m}, \hat{\tau}}\left[\sum_{t=0}^T \log (1 - D_\omega)\right]
\end{aligned}
\end{equation*} 
\begin{equation*}\label{eq:sampler}
\begin{aligned}
	\text{and } \max_\theta \Ebb_{\tilde{m}, \hat{\tau}}\left[\sum_{t=0}^T f_\omega - \log \pi_\theta^t(a^t \vert s^t,\tilde{m}) \right].
\end{aligned}
\end{equation*}
The update of $\psi$ is according to $\frac{\partial}{\partial \psi} \Kmc(\omega, \psi)$ and $\frac{\partial}{\partial \psi} \Lmc(\omega, \psi)$. As a summary, we present the pseudocode of PEMMIRL meta-training in Alg.~\ref{alg:PEMMFIRL}. The pseudocode of the meta-test is deferred until Appendix~\ref{app:meta-test}.

\section{Experiments}
We experimentally seek answers to two critical questions: (1) {\em Can PEMMFIRL accurately recover task type and reward function from demonstrated trajectories of multiple tasks?} (2) {\em Can PEMMFIRL effectively learn good policies for each task type?} We evaluate our algorithm on simulated MFG environments and a real-world spatial pricing problem for taxi rides. Detailed task descriptions are provided in Appendix~\ref{app:task}, while specific settings for dataset preprocessing, hyper-parameters, network architectures, and hardware environments can be found in Appendix~\ref{app:exp}.

\subsection{Simulated Tasks}
\noindent{\bf Task Descriptions.} There are three simulated MFG environments ordered in increasing complexity: 
%\begin{comment}
%\begin{itemize}
	%(1) {\bf Left-Right} (LR for short) \citep{cui2021approximately}: This abstract environment investigates norm emergence in large-scale systems, where agents choose between binary actions: left or right. 
	%(2) {\bf Rock-Paper-Scissors} (RPS) \citep{cui2021approximately}: This model generalises the non-zero-sum {\em Rock-Paper-Scissors} game \citep{shapley1964some}. Each agent can choose between ``rock'', ``paper'' and ``scissors'', and obtains a reward based on the number of beaten agents and agents beating the agent.
	(1) {\bf Virus infection} (VIRUS) \citep{cui2021approximately}: Agents choose between ``social distancing'' or ``going out'', affecting susceptible, infection and recovery probabilities.
	(2) {\bf Malware spread} (MALWARE) \citep{huang2017mean}: This model is representative of several problems with positive externalities, such as flu vaccination and economic models involving the entry and exit of firms.
	(3) {\bf Investment in product quality} (INVEST) \citep{subramanian2019reinforcement}: This model captures investment decisions in a fragmented market with multiple competing firms producing the same product.
%\end{itemize} 
%\end{comment}
%{\em Left-Right} ({\tt LR}) \citep{cui2021approximately},  {\em virus infection} ({\tt VIRUS}) \citep{cui2021approximately} and {\em investment in product quality} ({\tt INVEST}) \citep{subramanian2019reinforcement}, which simulate a series of large-scale multi-agent scenarios in the contexts of marketing strategy, virus propagation and norm emergence. %These tasks were originally studied in \citep{huang2017mean,subramanian2019reinforcement,cui2021approximately} and adapted by \citep{chen2022individual}.
\smallskip

\noindent{\bf Settings.} For each task, we define different reward functions $r(s,a,\mu,m)$ based on context variables $m$. As in \citep{chen2022individual}, for each $m$, the expert training process involves iteratively updating $\muv_E(\cdot \vert m)$ and $\piv_E(\cdot \vert m)$ until convergence: $\piv_E$ is updated using backward induction, while $\muv_E$ is updated using the MKV equation (Eq.~\eqref{eq:MKV}). Demonstrated trajectories (each with a length of 50) are generated using $(\muv_E(\cdot \vert m),\piv_E(\cdot \vert m))$ with $m$ drawn from a uniform distribution $p(m)$. The IRL algorithms have no access to $r(s,a,\mu,m)$, $m$, or $p(m)$.

\smallskip

\noindent{\bf Baselines.} Two state-of-the-art methods are compared:
%\begin{itemize}
	 {\bf (1) Population-level IRL (PLIRL)} \citep{yang2018learning}:  A centralised method that converts MFG to MDP and performs IRL on this MDP. It assumes cooperation among agents to maximise the population's average rewards. %Shown by \citep{chen2022individual}, PLIRL is only compatible with socially optimal equilibria that maximise the population's average rewards; otherwise, it can result in biased reward inference.
	 {\bf (2) Mean Field Adversarial IRL (MF-AIRL)} \citep{chen2023adversarial}: This method leverages adversarial learning to efficiently infer rewards for entropy-regularised MFGs.
%\end{itemize}

\smallskip

\noindent{\bf Performance Metrics.}
The quality of a learned reward function $f_\omega$ is evaluated by comparing its induced $\piv_\omega$ with the expert policy $\piv_E$. Two metrics are used to measure this difference -- statistical distance and cumulative rewards: 

\begin{enumerate}
	\item {\em Policy deviation}. The expectation over KL-divergence: $\Ebb_{m \sim p(m)} [ \sum_{t = 0}^T \sum_{s \in \Smc} D_{\KL} ( \pi_E^t (\cdot \vert s,m) \parallel \pi_\omega^t (\cdot \vert s,m) )]$ measures the statistical distance between two policies.
    %\item {\em MF flow Deviation} (Dev. MF). We use the cumulative KL-divergence, $\sum_{t = 0}^{T} D_{\KL} \big(  \tilde{\mu}^\star_t \parallel \mu^\omega_t \big)$, to measure the difference over two MF flows.
    \item {\em Expected return}. The difference between two expected returns of $(\muv_\omega, \piv_\omega)$ and $(\muv_E, \piv_E)$ under the ground-truth reward function $r(s,a,\mu,m)$ and the prior $p(m)$. 
\end{enumerate}

\begin{figure}
	\centering
\includegraphics[width=.47\textwidth]{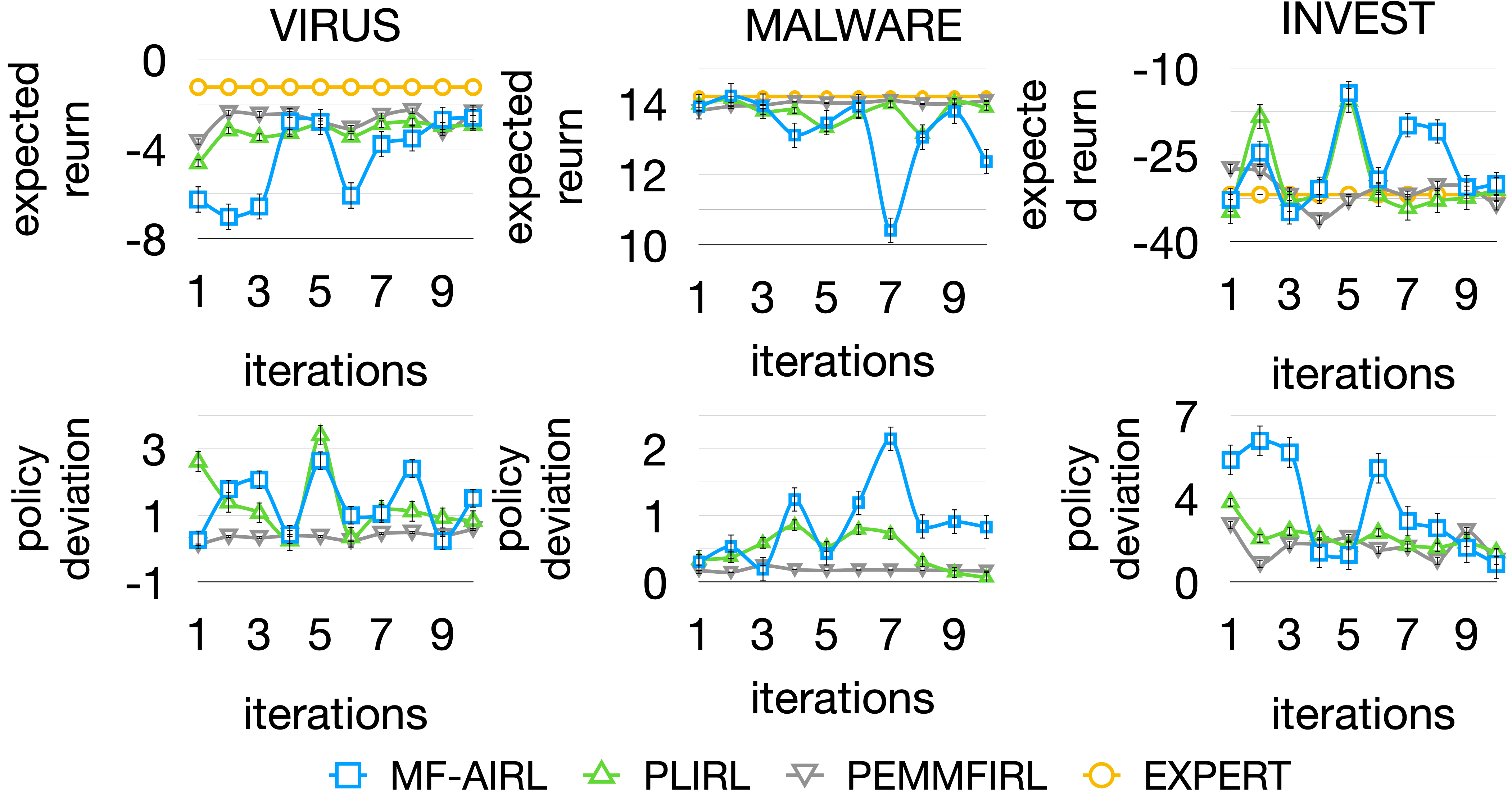}
	\caption{Results for simulated tasks. The curves and bars are the median and variance over ten independent runs.
	}\label{fig:simulated}
\end{figure}

\noindent{\bf Results.} 
As depicted in Fig.~\ref{fig:simulated}, the learned policy by PEMMFIRL shows a minor deviation from the expert policy, leading to a slight difference in expected return under the ground-truth reward function. Since PLIRL and MF-AIRL cannot handle different context variables, they both exhibit large deviations. Moreover, PLIRL shows the largest deviation as it is not suitable for non-fully cooperative environments. PEMMFIRL experiences a slight decrease in time performance due to the additional input parameters required for the context inference model $q(m|\tau)$. However, this performance decrease remains acceptable. Overall, PEMMFIRL performs well in various numerical experimental tasks, producing accurate results similar to expert performance. It demonstrates good stability during iterations, with a significantly lower average variance (32\%) in expected returns and policy deviations than other algorithms, indicating low fluctuations during the iterations.

\subsection{Spatial Pricing for Taxi Rides}

\noindent{\bf Task Description.} %Spatial pricing for taxi rides aims to implement different pricing strategies for the same taxi in different areas. It helps address the mismatched supply and demand in taxi services because the distribution of taxis and passengers is often uneven. %Some areas may have many empty taxis but few passengers, while others have more passengers than available taxis, leading to longer waiting times for customers. 
%By applying higher spatial pricing in areas with high demand and low supply, taxi drivers are encouraged to move to those areas when empty, increasing the availability of taxis. %Although individual passenger costs may increase, the overall request satisfaction rate improves. %This spatial pricing strategy has become an emerging research problem.
%The spatial pricing problem can be addressed using mean-field games, as the driver decides to find passengers according to their density \cite{ata2019spatial}. 
Spatial pricing for taxi rides aims to implement different pricing strategies in diverse areas to address the mismatched supply and demand in taxi services. By applying higher pricing in areas with high demand and low supply, taxis are encouraged to move to those regions when empty, increasing availability. The problem can be addressed using mean-field games, where drivers make decisions based on passenger density \cite{ata2019spatial}. %To further improve spatial pricing, we explore using PEMMIRL to differentiate drivers' spatial features in pricing.
By taking the value of PEMMIRL in the sense of pricing by differentiating drivers' personal preferences, we seek to improve drivers' profits further.

\smallskip

\begin{figure}%[htp]
	\centering
	\includegraphics[width=.47\textwidth]{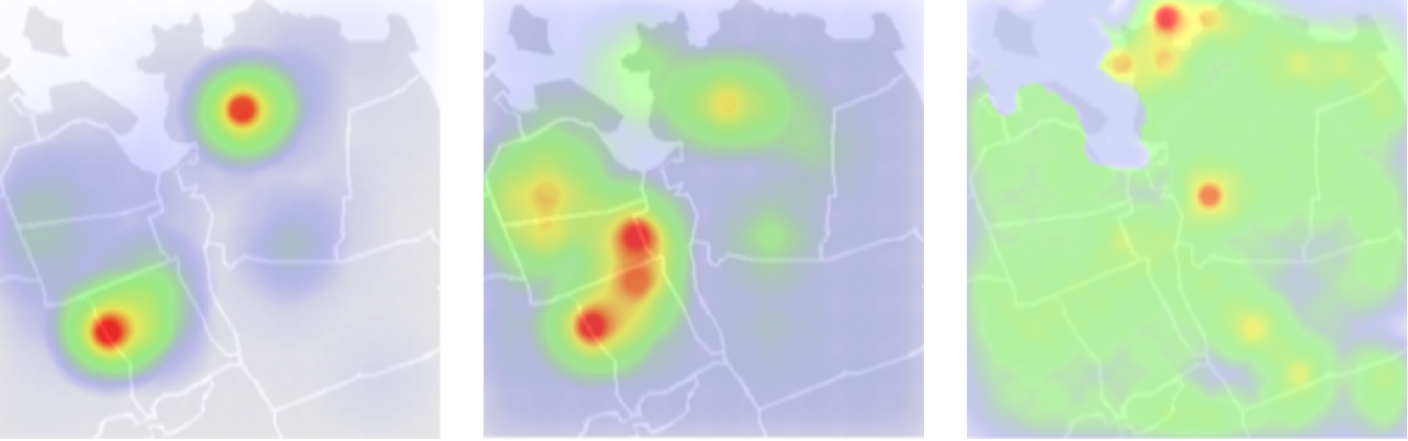}
	\caption{Illustration of the initial setup. From left to right: distribution of passengers, distribution of taxis, and distribution of passengers' travelling distances.}\label{fig:setup}
\end{figure}

\begin{figure}
	\centering
	\includegraphics[width=.47\textwidth]{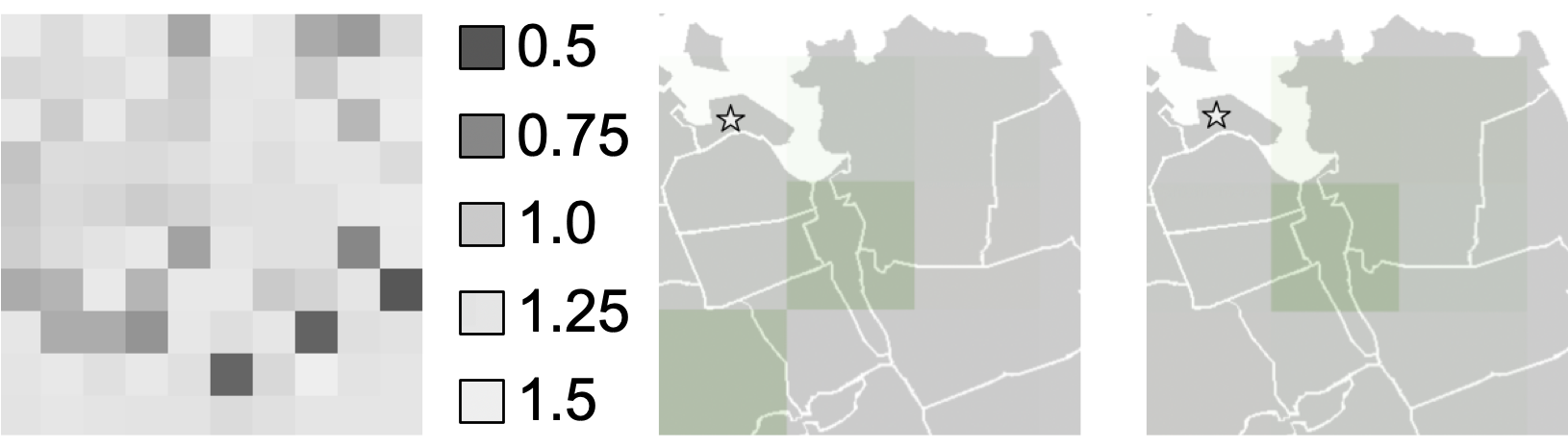}
	\caption{Illustration for the underlying price multipliers and the learned policies (with different context variables) at the initial step for drivers at the starred location. A driver takes passengers in a darker area with a higher probability.}\label{fig:policy} %The reasonability of both policies can be assured according to the conditions given in Fig.~\ref{fig:setup}.}
\end{figure}

\noindent{\bf Settings.} Real-world taxi operation records, including passenger travel time, departure locations, and destinations, are used as experimental data for passenger travel demand. This data is extracted from the New York Yellow Taxi Dataset \cite{ata2019spatial}. To ensure sufficient data in each segmented node, we select a subset of the data with one-month records that fall within part of Queens borough of New York City (see Fig.~\ref{fig:setup}) and divide it into a 10 $\times$ 10 grid using a boundary of 0.01$^\circ$ for longitude and latitude.

\citet{ata2019spatial} developed an empirical MFG model that fits the dataset. The driver's reward function calculates revenue considering destinations and deducts movement-related losses, such as fuel consumption and vehicle depreciation. It applies a surcharge based on the driver's origin. The origin-only reward function reported in \cite{ata2019spatial} is expressed as follows:
$$r(s=i,a=j,\mu) = (\eta^{0.5265} - \eta_s^{0.5265}) \cdot f(i,j,\mu),$$
where $i$ and $j$ represent the driver's origin area and the passenger's location, respectively; $\mu$ denotes the current density of empty taxis; $f$ is a fixed function to calculate the standard profit based on the passenger demand and passengers' travelling distances; $\eta$ is a constant upper limit for the price passengers are not willing to pay more than;  $\eta_s$ is referred to as the {\em price multiplier}, varying across regions to determine the surcharge scale of each area. As in \cite{ata2019spatial}, we set $\eta=2.33$ and adopt $\eta_s$ shown in Fig.~\ref{fig:policy}.

To additionally consider the dependency of the reward function on the context variable, we assume a context-conditioned profit  $f(i,j,\mu,m)$ (modelled by a neural network) with a discrete context variable $m \in \{1,2\}$. Intuitively, we believe $m$ captures the impact of drivers' preferences, {\em e.g.,} trip distances, over profits. %We consider the driver's local supply the context variable to inject spatial effects in pricing.
 %If there is an oversupply of taxis in a region compared to the number of passengers, the driver's earnings are multiplied by a discount rate. 
%We consider the driver's local supply the context variable to inject spatial effects in pricing. %We use the resulting reward function to generate expert trajectories. %based on which PEMMFIRL is used to teach the driver a pricing rule for finding potential passengers.
%During the experiment, %the data was indexed by months (a total of 48 months from January 2010 to December 2013), where each day's ten-hour daytime period is considered an epoch, with \
A trajectory has a total of 120 steps, and each step simulates five minutes. %At the beginning of each epoch, each taxi's initial location is set to a fixed position. 
The initial setup for trajectory sampling and reward (profit) calculation in terms of distributions of the passengers, the taxis and passengers' travelling distances is illustrated in Fig.~\ref{fig:setup}.

\smallskip

%\begin{figure}
%	\includegraphics[width=.25\textwidth]{designs/pricing.png}
%	\caption{Visualisation for the underlying price multipliers.} %as in \cite{ata2019spatial} and the pattern of context variables inferred by PEMMFIRL.}
%	\label{fig:spatial-pricing}
%\end{figure}

\noindent{\bf Baseline.} We compare all drivers' average profit induced by the learned policy of PEMMFIRL with that induced by the optimal policy of the empirical MFG model \citep{ata2019spatial}, under the context-conditioned profit.

\smallskip

%\noindent{\bf Performance Metrics.}  We adopt the average profit of all drivers within the selected area for evaluation. %(1) The matched degree between the inferred spatial multipliers and the real ones; (2) 

\noindent{\bf Results.} 
Fig.~\ref{fig:policy} demonstrates the learned policy by PEMMFIRL based on the inferred rewards functions. Applying this new pricing strategy and the corresponding policy to actual passenger demand data decreases served passengers compared to the original data. This is expected because no additional passenger demands are introduced beyond the dataset. 
By adjusting the upper limit of price $\eta$, different decay rates of passengers and increased rates (compared to the profits under the empirical MFG model in \citep{ata2019spatial}) of drivers' average profit are obtained. 
Note that changing $\eta$ does not affect the optimal policy, as it is a constant. The results in Tab.~\ref{tab:rates} indicate that for a relatively small decay proportion, the model increases the average profit of all drivers by 2.8\% (\$0.1308 per ride) with a 0.4\% reduction in passenger numbers. Further analysis reveals that most reduced passengers are those on short trips. For a higher decay proportion, the model increases the average profit of all drivers by 3.1\% (\$0.1448 per ride) with a 0.7\% reduction in passenger numbers. Overall, this suggests that the learned policy effectively raises the profit of taxi drivers. 

%the average profit consistently increases as the decay rate of served passengers ranges from 0.4\% to 0.7\%. This suggests that learned policy can effectively raise the profits of taxi drivers. 

\begin{comment}
\begin{table}
\centering
	\begin{tabular}{c c c c c}
	\toprule
		Decay rate of  & \multirow{2}{*}{- 0.4\%} & \multirow{2}{*}{- 0.5\%} & \multirow{2}{*}{- 0.6\%} & \multirow{2}{*}{- 0.7\%}\\
		served passengers & & & &\\
		\midrule
		Increase rate of  & +2.8\% & +2.3\% & +3.4\% & +3.1\%\\
		average profit & (\$13.08) & (\$10.74) & (\$15.89) & (\$14.48) \\
	\bottomrule
	\end{tabular}
	\caption{Increase rates of the average profit with different decay rates of served passengers.}\label{tab:rates}
\end{table}
\end{comment}

\begin{table}
\centering
\small
	\begin{tabular}{c c c c c}
	\toprule
		Value & Decay rate of & Increase rate of & Increased fare\\
		 of $\eta$ & served passengers & average profit &  per ride\\
		\midrule
		5 & - 0.4\% & +2.8\% & +\$0.1308\\
		10 & - 0.5\% & +2.3\% & +\$0.1074\\
		15 & - 0.6\% & +3.4\% & +\$0.1589\\
		20 & - 0.7\% & +3.1\% & +\$0.1448\\
	\bottomrule
	\end{tabular}
	\caption{Increase rate of the average profit and increased fare per ride under different decay rates of served passengers.}\label{tab:rates}
\end{table}

\section{Conclusions}

This paper addresses the incapacity of IRL  in the face of a large number of agents with distinct and unknown rewards. By introducing probabilistic contextual variables, we extend MFGs to handle heterogeneous agents without altering mean-field approximation. Based on this generalisation, we develop the Probabilistic Embeddings for Meta-Mean Field IRL (PEMMFIRL), allowing inferring reward functions from mixed-type demonstrations without prior knowledge of types. Experiments on simulated tasks and a real-world taxi-ride pricing problem reveal improvements over state-of-the-art IRL methods in MFGs.

\section{Acknowledgments}
This work was supported by a grant from the New Zealand Tertiary Education Commission and by the Strong AI Lab at the University of Auckland.

\bibliography{bib}
\newpage
\onecolumn
\setcounter{secnumdepth}{1}
\setcounter{section}{0}
\renewcommand\thesection{\Alph{section}}
\setcounter{equation}{0}
\renewcommand{\theequation}{\thesection\arabic{equation}} 

\appendix
\section{The Deviation of the Lagrangian Dual Function with Fixed Multipliers}\label{app:target-original}
~
 
\begin{equation*}
\begin{aligned}
	& \min_{\omega, \psi}    \Ebb_{m\sim p(m)}[D_{\KL}\left(p_{\muv_E, \piv_E}(\tau \vert m)|| p_\omega(\tau \vert m)\right)] +  \Ebb_{\tau \sim p_\omega(\tau)}[D_{\KL}\left(p_\omega(m \vert \tau)|| q_\psi(m \vert \tau)\right)] -I_{p_\omega(m;\tau)}\\
	= & \min_{\omega, \psi}    \Ebb_{m\sim p(m)}[D_{\KL}\left(p_{\muv_E, \piv_E}(\tau \vert m)|| p_\omega(\tau \vert m)\right)] + \Ebb_{m \sim p(m), \tau \sim p_\omega(\tau \vert m)}\left[ \log p_\omega(m \vert \tau) - q_\psi(m \vert \tau) \right]\\
	& ~~~~~~~~~~~~~ - \Ebb_{m\sim p(m), \tau \sim p_\omega(\tau \vert m)} \left[ \log p_\omega(m \vert \tau) - \log p(m) \right]\\
	 = & \min_{\omega, \psi}~  \Ebb_{m \sim p(m)}[D_{\KL}\left(p_{\muv_E, \piv_E}(\tau \vert m)|| p_\omega(\tau \vert m)\right)] +  \Ebb_{m \sim p(m), \tau \sim p_\omega(\tau \vert m)}\left[ \log \frac{p(m)}{p_\omega(m \vert \tau)} + \log \frac{p_\omega(m \vert \tau)}{q_\psi(m \vert \tau)} \right]\\
	 = & \min_{\omega, \psi}  \Ebb_{m \sim p(m)}[D_{\KL}\left(p_{\muv_E, \piv_E}(\tau \vert m)|| p_\omega(\tau \vert m)\right)] - \Ebb_{m \sim p(m), \tau \sim p_\omega(\tau \vert m)}[\log q_\psi(m \vert \tau) - \log p(m)],\\
	 = & \min_{\omega, \psi} \Ebb_{m \sim p(m)}[D_{\KL}\left(p_{\muv_E, \piv_E}(\tau \vert m)|| p_\omega(\tau \vert m)\right)] -  \Ebb_{m \sim p(m), \tau \sim p_\omega(\tau \vert m)}[\log q_\psi(m \vert \tau)].  
	%& ~~~~~~~~~-\Ebb_{m \sim p(m), \tau \sim p_\omega(\tau \vert m)}[\log p(m)]\\ 
	%\triangleq  & \argmin_{\omega, \psi} \Ebb_{m \sim p(m)}[D_{\KL}\left(p_{\muv_E, \piv_E}(\tau \vert m)|| p_\omega(\tau \vert m)\right)] - \Lmc_{\text{info}}(\omega, \psi).
\end{aligned}
\end{equation*}
The term $-\log p(m)$ in the third equation is omitted as it is a constant.

\section{Proof of Proposition~\ref{prop:Linfo}}\label{app:Linfo}

The gradient of $\Lmc(\omega, \psi)$ w.r.t. $\omega$ is expressed as

\begin{equation*}
	\begin{aligned}
	\frac{\partial}{\partial \omega} \Lmc(\omega, \psi) = & \frac{\partial}{\partial \omega} \Ebb_{m \sim p(m), \tau \sim p_{\omega,\psi}(\tau \vert m)}[\log q_\psi(m \vert \tau)]\\
	= & \Ebb_{m \sim p(m), \tau \sim p_{\omega,\psi}(\tau \vert m)} \left[ \log q_\psi(m \vert \tau) \frac{\partial}{\partial \omega} \log p_{\omega, \psi}(\tau \vert m) \right]\\
	= & \Ebb_{m \sim p(m), \tau \sim p_{\omega,\psi}(\tau \vert m)} \left[ \log q_\psi(m \vert \tau) \left(  \sum_{t=0}^T \frac{\partial}{\partial \omega} f_\omega - \frac{\partial}{\partial \omega} \log Z(\omega, \psi) \right) \right]\\
	= & \Ebb_{m \sim p(m), \tau \sim p_{\omega,\psi}(\tau \vert m)} \left[  \sum_{t=0}^T \frac{\partial}{\partial \omega} f_\omega - \Ebb_{\tau' 
 \sim p_{\omega,\psi}(\tau \vert m)} \left[ \sum_{t=0}^T \frac{\partial}{\partial \omega} f_\omega \right] \right]
	\end{aligned}
\end{equation*}

Generating trajectory samples from the energy-based $p_{\omega,\psi}(\tau \vert m)$ using $\piv_\theta^*$ (see Lemma~\ref{lem:MFIRL}), we get the estimate as given in Propostion~\ref{prop:Linfo}.

The gradient of $\Lmc(\omega, \psi)$ w.r.t. $\psi$ can be written as

\begin{equation*}
	\begin{aligned}
	\frac{\partial}{\partial \psi} \Lmc(\omega, \psi) = & \frac{\partial}{\partial \psi} \Ebb_{m \sim p(m), \tau \sim p_{\omega,\psi}(\tau \vert m)}[\log q_\psi(m \vert \tau)]\\
	= & \Ebb_{m \sim p(m), \tau \sim p_{\omega,\psi}(\tau \vert m)} \left[ \log q_\psi(m \vert \tau) \frac{\partial}{\partial \psi} \log p_{\omega, \psi}(\tau \vert m) +  \frac{\partial}{\partial \psi} \log q_\psi(m \vert \tau) \right]\\
	= & \Ebb_{m \sim p(m), \tau \sim p_{\omega,\psi}(\tau \vert m)} \Bigg[ \log q_\psi(m \vert \tau) \left( \sum_{t=0}^T \left[ \left(\frac{\partial f_\omega}{\partial \hat{\mu}_\psi^t} + \frac{1}{\hat{\mu}_\psi^t(s^t \vert m)}\right) \frac{\partial \hat{\mu}_\psi^t(s^t \vert m)}{\partial \psi} \right] - \frac{\partial}{\partial \psi} \log Z(\omega, \psi)  \right)\\
	& \qquad\qquad\qquad\qquad\qquad +  \frac{\partial}{\partial \psi} \log q_\psi(m \vert \tau) \Bigg]\\
	= & \Ebb_{m \sim p(m), \tau \sim p_{\omega,\psi}(\tau \vert m)} \Bigg[ \log q_\psi(m \vert \tau) \Bigg( \sum_{t=0}^T \left[ \left(\frac{\partial f_\omega}{\partial \hat{\mu}_\psi^t} + \frac{1}{\hat{\mu}_\psi^t(s^t \vert m)}\right) \frac{\partial \hat{\mu}_\psi^t(s^t \vert m)}{\partial \psi} \right] \\
	& \qquad\qquad\qquad\qquad\qquad -\Ebb_{\tau' 
 \sim p_{\omega,\psi}(\tau \vert m)} \left[ \sum_{t=0}^t \left(\frac{\partial f_\omega}{\partial \hat{\mu}_\psi^t} + \frac{1}{\hat{\mu}_\psi^t(s^t \vert m)}\right) \frac{\partial \hat{\mu}_\psi^t(s^t \vert m)}{\partial \psi} \right]\Bigg)\\
	& \qquad\qquad\qquad\qquad\qquad +  \frac{\partial}{\partial \psi} \log q_\psi(m \vert \tau) \Bigg]\\ 
	\end{aligned}
\end{equation*}

By taking the nottaion $\kappa(\tau,m) = \sum_{t=0}^T \left[ \left(\frac{\partial f_\omega}{\partial \hat{\mu}_\psi^t} + \frac{1}{\hat{\mu}_\psi^t(s^t \vert m)}\right) \frac{\partial \hat{\mu}_\psi^t(s^t \vert m)}{\partial \psi} \right]$, we can rewrite the equation above as

\begin{equation*}
	\frac{\partial}{\partial \psi} \Lmc(\omega, \psi) = \Ebb_{m\sim p(m),\tau \sim p_{\omega,\psi}(\tau \vert m)} \left[\log q_\psi(m \vert \tau) \left(\kappa(\tau)-\Ebb_{\tau' \sim p_{\omega,\psi}(\tau \vert m) }[\kappa(\tau')] \right) + \frac{\partial}{\partial \psi} \log q_\psi(m \vert \tau) \right]
\end{equation*}

Again, generating trajectory samples from the energy-based $p_{\omega,\psi}(\tau \vert m)$ using $\piv_\theta^*$, we get the estimate as given in Propostion~\ref{prop:Linfo}.

\begin{equation*}
	\frac{\partial}{\partial \psi} \log p_{\omega, \psi}(\tau \vert m) = \sum_{t=0}^T \left[ \left(\frac{\partial f_\omega}{\partial \hat{\mu}_\psi^t} + \frac{1}{\hat{\mu}_\psi^t(s^t \vert m)}\right) \frac{\partial \hat{\mu}_\psi^t(s^t \vert m)}{\partial \psi} \right] - \frac{\partial}{\partial \psi} \log Z(\omega, \psi)
\end{equation*}

\begin{comment}
\begin{equation*}
 \frac{\partial}{\partial \psi} \log Z(\omega, \psi) = \Ebb_{\tau 
 \sim p_{\omega,\psi}(\tau \vert m)} \left[ \sum_{t=0}^t \left(\frac{\partial f_\omega}{\partial \hat{\mu}_\psi^t} + \frac{1}{\hat{\mu}_\psi^t(s^t \vert m)}\right) \frac{\partial \hat{\mu}_\psi^t(s^t \vert m)}{\partial \psi} \right]
\end{equation*}

\begin{equation*}
	\frac{\partial}{\partial \omega} \log p_{\omega, \psi}(\tau \vert m) = \sum_{t=0}^T \frac{\partial}{\partial \omega} f_\omega(s^t,a^t,\hat{\mu}_\psi^t) - \frac{\partial}{\partial \omega} \log Z(\omega, \psi)
\end{equation*}

\begin{equation*}
 \frac{\partial}{\partial \omega} \log Z(\omega, \psi) = \Ebb_{\tau 
 \sim p_{\omega,\psi}(\tau \vert m)} \left[ \sum_{t=0}^T \frac{\partial}{\partial \omega} f_\omega(s^t,a^t,\hat{\mu}_\psi^t) \right]
\end{equation*}
\end{comment}

\section{Proof of Proposition~\ref{prop:Kinfo}}\label{app:Kinfo}
The gradient of $\Kmc(\omega, \psi)$ w.r.t. $\psi$ is expressed as
\begin{equation*}
\begin{aligned}
	\frac{\partial}{\partial \psi}\Kmc(\omega, \psi) = & \frac{\partial}{\partial \psi} \Ebb_{m \sim p(m)}[D_{\KL}\left(p_{\muv_E, \piv_E}(\tau \vert m)|| p_{\omega,\psi}(\tau \vert m)\right)]\\
	= & \frac{\partial}{\partial \psi} \Ebb_{m \sim p(m), \tau \sim p_{\muv_E, \piv_E}(\tau \vert m)} \left[ \log p_{\muv_E, \piv_E}(\tau \vert m) - \log p_{\omega,\psi}(\tau \vert m)   \right]\\
	= &  - \Ebb_{m \sim p(m), \tau \sim p_{\muv_E, \piv_E}(\tau \vert m)} \left[ \frac{\partial}{\partial \psi} \log p_{\omega,\psi}(\tau \vert m)   \right]\\
	= & \Ebb_{m \sim p(m), \tau \sim p_{\muv_E, \piv_E}(\tau \vert m)} \left[ \frac{\partial}{\partial \psi} \log Z(\omega, \psi) - \sum_{t=0}^T \left[ \left(\frac{\partial f_\omega}{\partial \hat{\mu}_\psi^t} + \frac{1}{\hat{\mu}_\psi^t(s^t \vert m)}\right) \frac{\partial \hat{\mu}_\psi^t(s^t \vert m)}{\partial \psi} \right]  \right]\\
	= & \Ebb_{m \sim p(m), \tau \sim p_{\muv_E, \piv_E}(\tau \vert m)} \Bigg[ \Ebb_{\tau' 
 \sim p_{\omega,\psi}(\tau \vert m)} \left[ \sum_{t=0}^t \left(\frac{\partial f_\omega}{\partial \hat{\mu}_\psi^t} + \frac{1}{\hat{\mu}_\psi^t(s^t \vert m)}\right) \frac{\partial \hat{\mu}_\psi^t(s^t \vert m)}{\partial \psi} \right] - \\
 	& \qquad\qquad\qquad\qquad\qquad \sum_{t=0}^T \left[ \left(\frac{\partial f_\omega}{\partial \hat{\mu}_\psi^t} + \frac{1}{\hat{\mu}_\psi^t(s^t \vert m)}\right) \frac{\partial \hat{\mu}_\psi^t(s^t \vert m)}{\partial \psi} \right] \Bigg].
\end{aligned} 
\end{equation*}

By taking the nottaion $\kappa(\tau,m) = \sum_{t=0}^T \left[ \left(\frac{\partial f_\omega}{\partial \hat{\mu}_\psi^t} + \frac{1}{\hat{\mu}_\psi^t(s^t \vert m)}\right) \frac{\partial \hat{\mu}_\psi^t(s^t \vert m)}{\partial \psi} \right]$, we can rewrite the equation above as

\begin{equation*}
	\frac{\partial}{\partial \psi}\Kmc(\omega, \psi) = \Ebb_{m \sim p(m), \tau \sim p_{\muv_E, \piv_E}(\tau \vert m)}[\Ebb_{\tau' 
 \sim p_{\omega,\psi}(\tau \vert m)}[ \kappa(\tau') ] - \kappa(\tau) ].
\end{equation*}

By generating synthetic samples for $m$ using the generative process in Eq.~\eqref{eq:generative} and generating trajectory samples from the energy-based $p_{\omega,\psi}(\tau \vert m)$ using $\piv_\theta^*$, we cam further write the equation above as 
\begin{equation*}
	\frac{\partial}{\partial \psi}\Kmc(\omega, \psi) = \Ebb_{\tau_E \sim p_{\muv_E, \piv_E}(\tau), \tilde{m} \sim q_\psi(m \vert \tau_E)} \left[ \Ebb_{\tilde{\tau} \sim p_{\hat{\muv}_\psi, \piv_\theta^*}(\tau \vert \tilde{m})}[\kappa(\tilde{\tau}, \tilde{m})] - \kappa(\tau_E, \tilde{m})  \right].
\end{equation*}

\section{Pesudo-code of Meta-test of PEMMFIRL}\label{app:meta-test}

\begin{algorithm}
\caption{PEMMFIRL Meta-test}
\begin{algorithmic}[1]
	\STATE {\bf Input:} A context variable $m \sim p(m)$, an expert demonstration $\tau_E \sim p_{\muv_E,\piv_E}(\tau \vert m)$ and the ground-truth $r(s,a,\mu,m)$.
	\STATE Infer the context variable by $\hat{m} \sim q_\psi(m \vert \tau_E)$.
	\STATE Optimise a policy w.r.t. the reward $r(s,a,\mu,\hat{m})$.
	\STATE Evaluate the learned policy with $r(s,a,\mu,m)$.
\end{algorithmic}
\end{algorithm}

\section{Descriptions for Numerical Tasks}\label{app:task}

\subsection{Virus Infection} 

\subsubsection{Model.} This virus infection is used as a case study in \citep{cui2021approximately}. There is a large number of agents in a building. Each can choose between ``social distancing'' ($D$) or ``going out'' ($U$). If a ``susceptible'' ($S$) agent chooses social distancing, they may not become ``infected'' ($I$). Otherwise, an agent may become infected with a probability proportional to the number of infected agents. If infected, an agent will recover with a fixed chance every time step. Both social distancing and being infected have an associated negative reward.
Formally, let $\Smc = \{S,I\}, \Amc = \{U,D\}, r(s,a, \mu_t,m) = -\mathds{1}_{\{s = I\}} - m \cdot \mathds{1}_{\{s = D\}}$. The transition probability is given by
\begin{equation*}
	\begin{aligned}
		P(s_{t+1} = S \vert s_t = I, \cdot, \cdot) & = 0.3\\
		P(s_{t+1} = I \vert s_t = S, a_t = U, \mu_t) & = 0.9^2 \cdot \mu_t(I)\\
		P(s_{t+1} = I \vert s_t = S, a_t = D, \cdot) &= 0.
	\end{aligned}
\end{equation*}

\subsubsection{Settings.} The initial mean field $\mu_0$ is set as a uniform distribution, i.e, $\mu_0(s) = 1 / |\Smc|$ for all $s \in \Smc$. The context variable $m$ domain is $\{0.5, 1\}$, and the prior distribution $p(m)$ is a uniform distribution.

\subsection{Malware Spread}

\subsubsection{Model.}  The malware spread model is presented in \citep{huang2016mean,huang2017mean} and used as a simulated study for MFG in \citep{subramanian2019reinforcement}. This model is representative of several problems with positive externalities, such as flu vaccination and economic models involving the entry and exit of firms. %collusion among firms, mergers, advertising, investment, network effects, durable goods, and consumer learning. 
Here, we present a discrete version of this problem:  Let $\Smc = \{0, 1, \ldots, 9\}$ denote the state space (level of infection), where $s = 0$ is the most healthy state and $s = 9$ is the least healthy state. The action space $\Amc = \{0, 1\}$, where $a = 0$ means $\mathtt{Do Nothing}$ and $a = 1$ means $\mathtt{Intervene}$. The dynamics is given by
\begin{equation*}
	s_{t+1} = \left\{
	\begin{aligned}
		& s_t + \lfloor \chi_t  ( 10- s_t ) \rfloor,  \text{~~if~~}  a_t = 0\\
		& 0, \text{~~if~~}a_t = 1
	\end{aligned}\right.,
\end{equation*}
where $\{\chi_t\}_{0\leq t \leq T}$ is a $[0, 1]$-valued i.i.d. process with probability density $f$. The above dynamics means the $\mathtt{Do Nothing}$ action makes the state deteriorate to a worse condition, while the $\mathtt{Intervene}$ action resets the state to the most healthy level. Rewards are coupled through the average health level of the population, i.e., $\langle \mu_{t} \rangle$ as defined in Eq.\eqref{eq:average_mf}. An agent incurs a cost $(k + \langle \mu_t \rangle) s_t$, which captures the risk of getting infected, and an additional cost of $\alpha$ for performing the $\mathtt{Intervene}$ action. The reward sums over all negative costs:
\begin{equation*}
	r(s_t, a_t, \mu_t,m) = -(m + \langle \mu_t \rangle)s_t/10 - \alpha \cdot a_t.
\end{equation*}

\subsubsection{Settings.} Following \citep{subramanian2019reinforcement}, we set $m = 0.2$ and $0.4$, $\alpha = 0.5$, and the probability density $f$ to the uniform distribution. The initial mean field $\mu_0$ and the prior distribution $p(m)$ are both set as a uniform distribution.

\subsection{Investment in Product Quality}

{\bf Model.} This model is adapted from \citep{weintraub2010computational} and \citep{subramanian2019reinforcement} that captures the investment decisions in a fragmented market with a large number of firms. Each firm produces the same kind of product. The state of a firm $s \in \Smc = \{ 0,1, \ldots, 9\}$ denotes the product quality. At each step, each firm decides whether or not to invest in improving the quality of the product. Thus the action space is $\Amc = \{0, 1\}$. When a firm decides to invest, its product quality increases uniformly at random from its current value to the maximum value 9 if the average quality in the market for that product is below a particular threshold $q$. If this average quality value is above $q$, then the product quality gets only half of the improvement compared to the former case. This implies that when the average quality in the economy is below $q$, it is easier for each firm to improve its quality. When a firm does not invest, its product quality remains unchanged. Formally, the dynamics is given by:

\begin{equation*}
	s_{t+1} = \left\{
	\begin{aligned}
		 & s_{t} + \lfloor \chi_t  ( 10- s_{t} ) \rfloor, \text{~~if~~}\langle \mu_{t} \rangle < q \text{~~and~~} a_{t} = 1\\
		 & s_{t} + \lfloor \chi_t  ( 10- s_{t} ) /2 \rfloor, \text{~~if~~}\langle \mu_{t} \rangle \geq q\text{~~and~~}a_{t} = 1\\
		 & s_{t}, \text{~~if~~}a_{t} = 0
	\end{aligned}\right..
\end{equation*} 

An agent incurs a cost due to its investment and earns a positive reward due to its own product quality and a negative reward due to the average product quality, which we denote by 
\begin{equation}\label{eq:average_mf}
\langle \mu_t \rangle \triangleq \sum_{s \in \Smc} s \cdot \mu_t(s).    
\end{equation}
The final reward is given as follows:
\begin{equation*}
	r(s_t,a_t,\mu_t,m) =  d \cdot s_{t} / 10 - c \cdot \langle \mu_{t} \rangle - m \cdot a_{t} 
\end{equation*}

\subsubsection{Settings.}  We set $d=0.3$, $c=0.2$ and probability density $f$ for $\chi_t$ as $U(0,1)$. We set the threshold $q$ to $4$. The context variable $m$ domain is $\{0.2, 0.5\}$ and the prior distribution $p(m)$ is a uniform distribution.

\section{Detailed Experimental Settings}\label{app:exp}

\subsection{Feature representations} We use one-hot encoding to represent states and actions. Let $\{1,2,\ldots, |\Smc|\}$ denote an enumeration of $\Smc$ and $\left[s_{[1]}, s_{[2]}, \ldots, s_{[|\Smc|]}\right]$  denote a vector of length $|\Smc|$, where each component stands for a state in $\Smc$. The state $j$ is denoted by $\big[0, \ldots, 0, s_{[j]} = 1, 0,$ $\ldots, 0 \big]$. An action is represented in the same manner. A mean field $\mu$ is represented by a vector $\left[\mu(s_{[1]}), \mu(s_{[2]}), \ldots, \mu(s_{[|\Smc|]})\right]$, where $\mu(s_{[i]})$ denotes the proportion of agents that are in the $i$th state. 

\subsection{Reward Models and Adaptive Samplers} The reward mode $r_\omega$ takes as input the concatenation of feature vectors of $s$, $a$, $\mu$ and $m$ and outputs a scalar as the reward. We adopt the neural network (a four-layer perceptron) with the Adam optimiser and the Leaky ReLU activation function. The sizes of the two hidden layers are both 64. The learning rate is $10^{-4}$. The adaptive sampler $\pi^{\theta_t}$ ($0 \leq t < T$) takes as input the feature vector of a state and outputs a distribution over the action set. The neural network architecture for each adaptive sampler is a five-layer perceptron with the Adam optimiser and the Leaky ReLU activation function. The sizes of the first two hidden layers are 64, the size of the third hidden layer is identical to the size of the action set, and the last layer is a softmax layer for generating a distribution over the action set.

\subsection{Computation of ERMFNE in Simulated Tasks} %Note that the learning algorithms  
In ERMFNE expert training, we repeat the fixed point iteration to compute the MF flow. We terminate at the $i$th iteration if the mean squared error over all steps, and all states are below or equal to $10^{-10}$, i.e., $$\frac{1}{(T-1)|\Smc|} \sum_{t=1}^{T-1}\sum_{s \in \Smc} \left(\mu^{(i)}_t(s) - \mu^{(i-1)}_t(s)\right)^2 \leq 10^{-10}.$$

\subsection{Preprocessing the New York Yellow Taxi Dataset}

Most of the data in the dataset is collected by instruments, but it contains a considerable amount of dirty data that deviates significantly from standard cases due to various factors. The dataset also explicitly states that it does not guarantee the accuracy of each data entry. Therefore, cleaning and removing this dirty data is essential in the data preprocessing stage. Specifically, data where the passenger's drop-off time is earlier than the pick-up time and data with a total trip time of less than 1 minute is eliminated. Additionally, data with extreme outliers in latitude and longitude are removed. For example, while most data points are located in New York City, some individual data points may be found in the southern and eastern hemispheres or far beyond reasonable latitude and longitude values. These outlying location points should be considered dirty data and removed.

Each node (probably referring to different data points or sources) needs sufficient data, which is crucial for this model. If a node has too little data, some parameters in the model will be difficult to estimate reliably, significantly impacting the model's effectiveness. Therefore, a preliminary dataset analysis has been conducted, as shown in Tab.~\ref{tab:data_analysis}.

\begin{table}[!htp]
	\centering
	\begin{tabular}{c c c c c c}
		\toprule
		item & distance (mile) & pick-up longitude & pick-up latitude & drop-off longitude & drop-off latitude\\
		\midrule
		mean & 0.72 & -72.67 & 40.04 & -72.69 & 40.05\\
		min  & 0.00 & -770.4 & -0.07 & -735.6 & -0.07\\
		lower quartile & 0.50&-73.990& 40.740 & -73.990 & 40.740\\
		median &0.70&-73.980& 40.756 & -73.979& 40.756\\
		upper quartile & 0.90 & -73.966 & 40.770 & -73.965 & 40.772\\
		max & 41.9 & +0.077 & 404.6 & +0.076 & 456.5\\
		\bottomrule
	\end{tabular}
	\caption{Priliminary results of data analysis.}\label{tab:data_analysis}
\end{table}

Based on the preliminary analysis results shown in Tab.~\ref{tab:data_analysis} and further analysis of the dataset, for the subsequent experiments, data within the longitude range of -73.9$^\circ$ to -73.8$^\circ$ and latitude range of 40.7$^\circ$ to 40.8$^\circ$ will be selected. Most of the data in the dataset falls within this range, ensuring that each segmented node has adequate data.

Using a granularity of 0.01$^\circ$ for both latitude and longitude, the above region will be divided into 10 $×$ 10 grid nodes. In the data preprocessing stage, the latitude and longitude of the passenger's pick-up and drop-off locations will be converted into corresponding node numbers. Then, the passenger data within each node will be analysed, and statistics will be generated, such as determining the number of passengers and their destination distribution within each node.

Based on the model established in \citep{ata2019spatial}, it is necessary to calculate the destination node distribution for passengers boarding at each node. In this context, the overall distribution of passenger destinations is not important for the model; the model is concerned with calculating the destination node distribution separately for each departure node. Here are two examples:

\begin{figure}[htp]
	\centering
	\includegraphics[width=.4\textwidth]{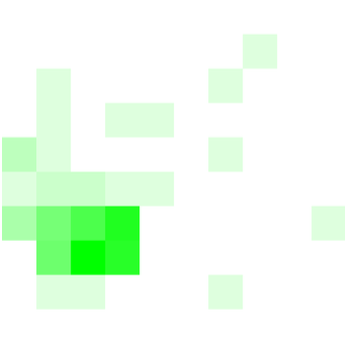}\hfil
	\includegraphics[width=.4\textwidth]{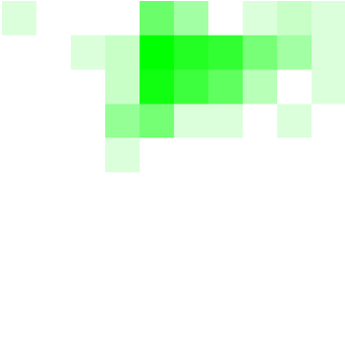}
	\caption{Passenger destination maps for Node 27 (left) and 52 (right).}\label{fig:27and52}
\end{figure}

Example 1: Passengers boarding from the node in the bottom-left region (node number 27) are the most frequent. The destination node distribution for these passengers is illustrated in Fig.~\ref{fig:27and52}. The darker green indicates a higher number of passengers, lighter green represents fewer passengers, and white areas indicate no passengers meeting this criterion. The statistical results show that 41.10\% of passengers have their destination at the same node they depart from (node number 27), 20.83\% have their destination towards the upper-right direction from the departure node (node number 36), and 17.61\% have their destination towards the right side of the departure node (node number 37)—passengers with destinations within one ring adjacent to the departure node account for a combined 95.27\%. Additionally, a small number of passengers travel to farther areas within the city, covering almost the entire selected latitude and longitude range.

Example 2: The second most common departure node is from the node in the upper-middle region (node number 52). The destination node distribution for passengers boarding from this node is shown in Fig.~\ref{fig:27and52}. Similar to the previous example, darker green represents a higher number of passengers. The statistical results indicate that only 8.37\% of passengers boarding from this node have their destination at the same node (node number 52). The majority of passengers are heading towards the upper-left direction from the departure node (node number 41, accounting for 27.63\%) and the left side of the departure node (node number 42, accounting for 20.23\%)—passengers with destinations within one ring adjacent to the departure node account for a combined 87.55\%. There are also a few passengers going to other areas. Still, unlike the previous example, all passengers boarding from this node are heading to the upper region, and no passengers are going to the lower region. These two examples illustrate significant differences in the destination distribution of passengers boarding from different locations. The destination distribution has been calculated for all departure nodes.

Additionally, in the data preprocessing stage, it is necessary to calculate the distribution of taxi positions at the starting moment. Due to the lack of unique identifiers for each taxi in the dataset, it is impossible to differentiate which passenger-carrying data belongs to the same taxi. Therefore, to determine the starting positions of taxis for a day, it is considered to use the pick-up locations from the first few epochs within the simulated time period. The resulting distribution is shown in Fig.~\ref{fig:setup}, where blue represents fewer taxis starting the day from that position, green indicates a moderate number, and red indicates more taxis starting the day from that position. The figure shows that a significant number of taxi drivers start their day's operation from the central-left area. Additionally, there are some taxis starting from the left and upper-middle areas, while fewer taxi drivers start their day from other areas.

The above diagrams show various instances of mismatch: areas with high passenger demand for rides, areas with high one-way taxi earnings, and areas with abundant taxi supply, and these areas do not overlap. Regions with a high number of passengers may need more taxi supply, leading to inadequate fulfilment of passenger ride demands. Conversely, regions with a high number of taxis may need more passenger demand, resulting in revenue losses for taxis during idle periods. Moreover, areas with high passenger ride demand might yield low operational profits. These instances of mismatch could be more favourable for both drivers and passengers.

\subsection{Hardware}

The hardware and operating system of the experimental environment are shown in Tab.~\ref{tab:hardware}.

\begin{table}[!htp]
	\centering
	\begin{tabular}{c c}
		\toprule
		item & Model and version\\
		\midrule
		CPU & Intel Xeon CPU E5-2620 v2 @ 2.10 GHz\\
		GPU  & Nvidia Tesla P-100\\
		Memory & DDR4 32G\\
		Operating System & Ubuntu 16.04\\
		\bottomrule
	\end{tabular}
	\caption{The hardware and operating system of the experimental environment.}\label{tab:hardware}
\end{table}

\end{document}